\documentclass[journal]{IEEEtran}

\ifCLASSINFOpdf
\else
   \usepackage[dvips]{graphicx}
\fi
\usepackage{amsthm}
\usepackage{url}
\usepackage{cite}

\usepackage{amsmath,amssymb,amsfonts}
\usepackage{algorithmic}
\usepackage{graphicx}
\usepackage{textcomp}
\usepackage{xcolor}
\usepackage{enumerate}  
\DeclareMathOperator*{\argmin}{arg\,min}

\newcommand{\bLambda}{\boldsymbol{\Lambda}}
\newcommand{\bSigma}{\mathbf{\Sigma}}

\newcommand{\sign}{\text{sign}}

\newcommand{\bA}{\mathbf{A}}
\newcommand{\bB}{\mathbf{B}}

\newcommand{\bG}{\mathbf{G}}

\newcommand{\bI}{\mathbf{I}}

\newcommand{\bL}{\mathbf{L}}

\newcommand{\bW}{\mathbf{W}}
\newcommand{\bX}{\mathbf{X}}
\newcommand{\bY}{\mathbf{Y}}
\newcommand{\bZ}{\mathbf{Z}}

\newcommand{\bb}{\mathbf{b}}

\newcommand{\bg}{\mathbf{g}}

\newcommand{\bw}{\mathbf{w}}
\newcommand{\bx}{\mathbf{x}}
\newcommand{\by}{\mathbf{y}}
\newcommand{\bz}{\mathbf{z}}

\newcommand{\hbW}{\hat{\mathbf{W}}}

\newcommand{\bbE}{\mathbb{E}}
\newcommand{\bbR}{\mathbb{R}}

\newcommand{\bbN}{\mathbb{N}}

\newcommand{\calB}{\mathcal{B}}

\newcommand{\calF}{\mathcal{F}}

\newcommand{\calL}{\mathcal{L}}
\newcommand{\calN}{\mathcal{N}}
\newcommand{\calM}{\mathcal{M}}

\newcommand{\calS}{\mathcal{S}}

\newcommand{\bzero}{\mathbf{0}}

\newcommand{\tr}{\text{Tr}}

\newtheorem{theorem}{Theorem}
\newtheorem{Proposition}{Proposition}
\newtheorem{fact}{Fact}
\newtheorem{remark}{Remark}
\newtheorem{lemma}{Lemma}
\newtheorem{corollary}{Corollary}
\newtheorem{definition}{Definition}
\newtheorem{assumptions}{Assumptions}

\begin{document}

\title{Dual Space Preconditioning for Gradient Descent in the Overparameterized Regime}

\author{ Reza Ghane, \IEEEmembership{Graduate Student Member, IEEE}, Danil Akhtiamov, \IEEEmembership{Graduate Student Member, IEEE}, Babak Hassibi, \IEEEmembership{Member, IEEE}  \thanks{Reza Ghane and Danil Akhtiamov contributed equally.} \thanks{Reza Ghane and Babak Hassibi are with the Electrical Engineering Depart-
ment, California Institute of Technology, Pasadena, CA 91125 USA (e-mail:
rghanekh@caltech.edu; hassibi@caltech.edu).\\
Danil Akhtiamov is with the Computing + Mathematical Sciences Depart-
ment, California Institute of Technology, Pasadena, CA 91125 USA (e-mail:
dakhtiam@caltech.edu).}}
\maketitle

\markboth{}
{Shell \MakeLowercase{\textit{et al.}}: Bare Demo of IEEEtran.cls for IEEE Journals}

\begin{abstract}

In this work, we study the convergence properties of the Dual Space Preconditioned Gradient Descent, encompassing optimizers such as Normalized Gradient Descent and Gradient Clipping. We consider preconditioners of the form $\nabla K$, where $K: \bbR^{d \times k} \to \bbR$ is convex and apply $\nabla K(\cdot)$ to train an over-parameterized linear model with a convex loss of the form $\ell(\bX \bW - \bY)$, for weights $\bW \in \bbR^{d \times k}$, labels $\bY \in \bbR^{n \times k}$ and data $\bX \in \bbR^{n \times d}$. Under the aforementioned assumptions, we prove that the iterates of the full-batch preconditioned gradient descent converge at an exponential rate to a point $\bW_{\infty} \in \bbR^{d \times k}$ satisfying $\bX\bW_{\infty} = \bY$. 

We also study the implicit bias of Dual Space Preconditioned Gradient Descent. First, we demonstrate analytically and empirically that, for general $K(\cdot)$, $\bW_\infty$ depends on the chosen constant step size, hindering a precise characterization of the implicit bias. We also provide an approximate implicit bias property for general preconditioners, namely, $\|\bW_0 - \bW_{\infty}\|_F \le c \|\bW_0 - \bW_{\text{GD}, \infty}\|_F$ for a constant $c>0$ and $\bW_{\text{GD}, \infty}$ denoting the convergence point of GD initialized at $\bW_0$. Furthermore, for preconditioners of the form $K(\bG) = h(\|\bG\|_F)$,  known as {\it isotropic preconditioners}, and for the stochastic variation of the algorithm with arbitrary batch-size, we prove linear convergence to $\bW_{\text{GD}, \infty}$. Finally, in the experiments, we demonstrate faster convergence on a nonlinear model obtained using the smoothed matrix elastic-net as a preconditioner.



\end{abstract}

\begin{IEEEkeywords}
Dual Space Preconditioner, Convergence, Implicit Bias, Adam, Gradient Clipping, Normalized Gradient Descent
\end{IEEEkeywords}
\IEEEpeerreviewmaketitle

\section{Introduction}

The empirical success of stochastic gradient descent and its adaptive extensions (e.g., Adam \cite{kingma2014adam}, Adagrad \cite{duchi2011adaptive}) in neural network training has motivated extensive recent research aimed at enhancing gradient-based optimization methods. Methods such as SignSGD \cite{bernstein2018signsgd}, Gradient Clipping \cite{zhang2019gradient}, Spectral GD \cite{bernstein2024old}, and Adam share a common characteristic: they employ a nonlinear function of the loss gradient in the update rule. To investigate this family of algorithms more systematically,\cite{maddison2021dual} proposed considering {\it Dual Space Preconditioned GD}, defined as 
\begin{align*}
    \bw_t =  \bw_{t-1} - \eta \nabla K\left(\nabla\calL(\bw_{t-1})\right),
\end{align*}
where $K$ is a convex function, $\calL$ is the empirical loss and $\bw_t \in \bbR^d$ denotes the weights at time step $t$. 
Despite an existing body of work \cite{maddison2021dual, laude2023dualities, laude2025anisotropic, oikonomidis2025nonlinearly}, our theoretical understanding of the convergence properties of the Dual Space Preconditioned GD in the overparameterized regime remains limited. 

In this paper, we consider the following optimization $\min_{\bW \in \bbR^{d\times k}} \calL(\bW)$ problem and apply the full-batch Dual Space Preconditioned GD to minimize the loss:
\begin{align}\label{alg: precond}
    \bW_t =  \bW_{t-1} - \eta \nabla K\left(\nabla\calL(\bW_{t-1})\right)
\end{align}
Here, $\bW \in \bbR^{d \times k}$ is matrix-valued. Furthermore, often, for the empirical loss of the form $ \min_{\bW \in \bbR^{d\times k}} \frac{1}{n} \sum_{i=1}^n \calL_i(\bW)$, where $\calL_i$ is the loss associated with the $i$th data point, we use the stochastic Dual Space Preconditioned GD with batch size $B$:
\begin{align}\label{alg: stoch}
    \bW_t =  \bW_{t-1} - \eta \nabla K\left(\nabla\calL^{(B)}_t(\bW_{t-1})\right)
\end{align}
Here, $\calL^{(B)}_t(\cdot):= \frac{1}{B} \sum_{i = 1}^B \calL_{t_i}(\cdot)$ where $t_i \in [n]$ are selected uniformly without replacement and independent of previous steps.
Another contribution of the current manuscript is that we incorporate the matrix structure of $\bW$ in our analysis, allowing for matrix preconditioners, whereas in previous works \cite{maddison2021dual, laude2025anisotropic, oikonomidis2025nonlinearly}, only the vector structure of $\bW \in \bbR^{p}$ was taken into account.  We motivate introducing matrix structure by the recent success of matrix preconditioners, such as Muon, SOAP, and Shampoo \cite{jordan2024muon, vyas2024soap, gupta2018shampoo}. 
We focus on convex loss functions of the form $\calL(\bW) := \ell(\bX\bW-\bY)$ where $\bX \in \bbR^{n \times d}$ is the data matrix, $\bW \in \bbR^{d \times k}$ is the weight matrix being optimized and $\bY \in \bbR^{n \times k}$ is the label matrix. In this work, we consider the regime where $n<d$ and thus $\ell(\bX\bW - \bY)$ {\it is not strictly convex and does not have a unique minimizer}. It should be noted that most works in the literature assume strict convexity, making our proof of convergence for Dual Space Preconditioned GD novel in this setting.  

Since $\bX\bW = \bY$ has many solutions due to the over-parametrization,  it is natural to further inquire into the characterizing properties of $\bW_{\infty}$ among this set of solutions, usually known as {\it implicit bias} in the literature.  As an example of an algorithm displaying implicit regularization, it has been shown \cite{akhtiamov2026implicitbiasconvergencematrix} that for the loss $\ell(\bX\bW-\bY)$ mirror descent (MD) and stochastic mirror descent (SMD) with a potential $\psi(\cdot)$, find the interpolating point closest in Bregman Divergence to the initialization, i.e., $\bX \bW = \bY$. In other words,
\begin{align*}
    \min_{\bw} D_{\psi}(\bW, \bW_0) \\
    s.t \quad \bX \bW = \bY
\end{align*}
Different choices of $\psi$ lead to interpolating weights that differ in properties and generalization error \cite{gunasekar2018characterizing, azizan2018stochastic,akhtiamov2026implicitbiasconvergencematrix}. 

As outlined in the abstract, the remainder of our paper is devoted to analyzing the implicit bias of Dual Space Preconditioned GD, with particular attention to the case of isotropic preconditioners. In addition, we establish an exponential convergence rate of the weights for isotropic preconditioners.

\subsection{Related Work}

\textbf{Dual Space preconditioning: } Maddison et al. \cite{maddison2021dual} coined the term "Dual Preconditioning". For convex loss functions $f$ with a unique minimizer, \cite{maddison2021dual} provides a convergence rate for the loss. In contrast to the setting of \cite{maddison2021dual}, we operate in the overparameterized regime where the loss function does not have a unique minimizer. Moreover, the Dual Space Preconditioned GD belongs to a family of algorithms called Lion-$\mathcal{K}$ introduced in \cite{chen2023lion}.
Several works extend the results of \cite{maddison2021dual} to nonconvex loss functions \cite{laude2025anisotropic, bodard2025escaping, oikonomidis2025nonlinearly}. Furthermore, \cite{oikonomidis2025nonlinearlymom} investigated Dual Space Preconditioned GD \eqref{alg: precond} with momentum and a batch size of one. Nonetheless, none of the aforementioned works investigate the convergence of the \textit{weights} as well as the implicit bias.

\textbf{Implicit bias in the overparameterized regime:} The work \cite{wilson2017marginal} was among the first to motivate the study of algorithms in the overparameterized regime, in which, despite achieving a zero training error, the algorithm might generalize poorly. They construct an example on which AdaGrad generalizes worse than SGD.

The convergence and the implicit bias of mirror descent \cite{gunasekar2018characterizing} and its stochastic variants \cite{azizan2018stochastic, Varma2025Exponential, akhtiamov2024regularized} have attracted attention. Furthermore, for the convergence of mirror descent on nonlinear models, \cite{azizan2021stochastic} has characterized an approximate implicit regularization property.

Another direction in implicit bias research deals with the convergence of the direction of a classifier. For GD and SGD, \cite{soudry2018implicit, nacson2019stochastic} demonstrate that the iterates of GD and SGD for minimizing the logistic loss on linearly separable data converge in direction to $\ell_2$ max-margin classifier. \cite{qian2019implicit} shows the convergence in direction of the Adagrad iterations to the solution of a hard-constrained weighted quadratic optimization problem. Furthermore, \cite{zhang2024implicit} proved that the iterates of Adam would converge in direction to the max $\ell_\infty$-margin classifier, contingent on uniqueness.  \cite{fan2025implicit} established that, for the matrix normalized steepest descent, by considering the cross-entropy loss, the normalized margin gap of the iterates converges to the Schatten-$p$ max-margin value.

\subsection{Contributions}
\begin{enumerate}
    \item We prove the convergence and provide a convergence rate for the weights generated by algorithm \eqref{alg: precond} for convex losses in the overparameterized regime and provide an upper bound on the distance of the convergence point of \eqref{alg: precond} from the final GD solution which depends on the distance of the initialization point from the latter. We apply our results to two particular examples, a simplified Adam-like update and a smoothed matrix elastic-net as the preconditioner.
    \item For the isotropic preconditioners, we show that the iterates of the stochastic algorithm \eqref{alg: stoch} converge to the interpolating solution in minimum Frobenius-norm distance from the initialization. In other words, \eqref{alg: stoch} for the isotropic preconditioner has the same implicit bias as SGD \cite{gunasekar2018characterizing, azizan2018stochastic}. Using these results, we demonstrate the convergence of normalized gradient descent and Stochastic gradient clipping \cite{zhang2019gradient}
    \item We provide an analytical example of a nonisotropic preconditioner in which the point of convergence depends on the step size. Furthermore, for the case of simplified Adam-like update, numerical evidence is given.
\end{enumerate}

\section{Notations and Definitions}

We use bold-faced letters to denote matrices and vectors. $n \ge 1$ denotes the number of data points in the data matrix $\bX \in \bbR^{n \times d}$ and $d > n$ denotes the number of features. Moreover, $\bY \in \bbR^{n \times k}$ denotes the labels. We use $\sigma_i(\bA)$ to denote the $i$th largest singular value of $\bA$. 
\begin{definition}
    We have the following notations:
    \begin{itemize}
    \item We denote the set $range(\bX^T)^k$ by 
\begin{align} \label{def: span_sub}
    \calS := \{\bX^T \bL: \bL \in \bbR^{n\times k}\}
\end{align}
    \item  Define the interpolating manifold $\calM$ as
\begin{align} \label{def: manif}
    \calM := \{\bW \in \bbR^{d\times k}: \bX \bW = \bY\}
\end{align}
    \item For$\rho > 0$, let $\calB_{\rho} := \{\bZ \in \bbR^{d \times k} : \|\bZ\|_F \le \rho \}$ .
\end{itemize}
\end{definition}

\begin{definition}
[Strong Convexity]
     $f: \bbR^{d \times k}\rightarrow \bbR$ is $\mu$-strongly convex if 
    \begin{align*}
        f(\bW_2) &\ge f(\bW_1) + \tr\Bigl(\nabla f(\bW_1)^T (\bW_2 - \bW_1) \Bigr)  \\ & +\frac{\mu}{2} \|\bW_1 - \bW_2 \|_F^2
    \end{align*}
\end{definition}

\begin{definition}\label{def: fenchel}
(Fenchel Conjugate)
    We define $f^{\ast}: \bbR^{d \times k}\rightarrow \bbR$ to be the Fenchel conjugate of $f: \bbR^{d \times k}\rightarrow \bbR$ if
    \begin{align*}
        f^{\ast}(\bZ) = \max_{\bZ' \in \bbR^{d \times k}} \tr( \bZ^T \bZ') - f(\bZ')
    \end{align*}
\end{definition}
\begin{definition}\label{def: lip_grad}
    (Lipschitz Gradient)
    A differentiable function $f: \bbR^{d \times k}\rightarrow \bbR$ has $L$-Lipschitz gradient if for $\bA, \bB \in \bbR^{d \times k}$:
    \begin{align*}
        \|\nabla f(\bA) - \nabla f(\bB) \|_F \le L \|\bA - \bB\|_F
    \end{align*}
\end{definition}
We define $K: \bbR^{d \times k} \to \bbR$ to be a convex function whose gradient defines the preconditioner $\nabla K: \bbR^{d \times k} \to \bbR^{d \times k}$ and $\calL(\bW)$ to be the training loss achieved at $\bW$.

\begin{definition}[Bregman Divergence]\label{def: breg}
    For any $\bA, \bB \in \bbR^{d\times k}$ and $f:\bbR^{d \times k} \rightarrow \bbR$, the standard Bregman Divergence is defined as 
    \begin{align*}
        D_{f}(\bA,\bB) :=&  f(\bA) - f(\bB) - \tr{\left(\nabla f(\bB)^T(\bA - \bB)\right)}
    \end{align*}
\end{definition}



In order to state the fundamental identity used for the analysis of the convergence of algorithm \eqref{alg: precond}, we need the following preliminary result.

 \begin{lemma}[Adjusted Bregman Divergence]\label{lem: gen_breg}
    For any $\bA, \bB \in \bbR^{d\times k}$ and $f:\bbR^{d \times k} \rightarrow \bbR$, the Adjusted Bregman Divergence is defined as 
    \begin{align*}
        \tilde{D}_{f}(\bA,\bB) :=& f^*(\nabla f(\bA)) - f^*(\nabla f(\bB)) \\
    &- \tr{\left(\bB^T(\nabla f(\bA) -\nabla f(\bB))\right)}
    \end{align*}
    It equals $D_f(\bB, \bA)$ by Fenchel duality. \cite{bauschke1997legendre}
\end{lemma}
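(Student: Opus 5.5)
The proof rests on the Fenchel--Young equality. For a differentiable convex $f$ (closed and proper, as is implicitly assumed whenever $\nabla f$ appears), and any point $\bA$, we have
\begin{align*}
f^*(\nabla f(\bA)) = \tr\bigl(\nabla f(\bA)^T \bA\bigr) - f(\bA).
\end{align*}
To justify this, I would note that the map $\bZ' \mapsto \tr(\nabla f(\bA)^T \bZ') - f(\bZ')$ is concave and its gradient vanishes at $\bZ' = \bA$. By concavity, this stationary point is a global maximizer, so the maximum in Definition~\ref{def: fenchel} is attained at $\bA$ and equals the expression above. The same identity holds with $\bB$ in place of $\bA$.

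Next I would substitute both identities into the definition of $\tilde{D}_f(\bA,\bB)$:
\begin{align*}
\tilde{D}_f(\bA,\bB) &= \tr\bigl(\nabla f(\bA)^T\bA\bigr) - f(\bA) - \tr\bigl(\nabla f(\bB)^T\bB\bigr) + f(\bB) \\
&\quad - \tr\bigl(\bB^T\nabla f(\bA)\bigr) + \tr\bigl(\bB^T\nabla f(\bB)\bigr).
\end{align*}
The two terms in $\tr(\nabla f(\bB)^T\bB)$ cancel. Since $\tr(\bB^T\nabla f(\bA)) = \tr(\nabla f(\bA)^T\bB)$, the remaining terms combine into
\begin{align*}
f(\bB) - f(\bA) - \tr\bigl(\nabla f(\bA)^T(\bB-\bA)\bigr).
\end{align*}
By Definition~\ref{def: breg}, this is exactly $D_f(\bB,\bA)$.

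The only delicate point is the first step, namely that the supremum defining $f^*$ is attained at $\bA$ and is finite. This requires $f$ to be convex and differentiable; for nonconvex $f$ the identity can fail. So I would state explicitly that $f$ is assumed convex and differentiable, as it is in every application in the paper ($f = K$ or $f = \calL$). The rest is routine trace bookkeeping.
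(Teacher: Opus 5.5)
Your proof is correct and is essentially the paper's argument: the paper writes no proof beyond ``by Fenchel duality'' with a citation, and the Fenchel--Young equality at $\nabla f(\bA)$ and $\nabla f(\bB)$, followed by the trace cancellation you carry out, is exactly that content. You are also right to add convexity (with differentiability) as a hypothesis, since the lemma's ``for any $f$'' is false without it; in the paper it is only applied to the convex loss $\calL$.
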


The first set of assumptions is used to derive a fundamental identity as in Proposition \ref{prop: fund_id}, to establish the convergence, and to show the boundedness of the loss gradients in \eqref{alg: precond}.

\begin{assumptions} \label{ass: main_gen}
    \begin{enumerate}
        \item The dual reference function $K: \bbR^{d\times k} \rightarrow \bbR$ is differentiable and convex. Furthermore $K(\bA) > 0$ for every  $\bA \neq \bzero$, and $K(\bzero) = 0, \nabla K(\bzero) = \bzero$ and $K$ is coercive.
        \item The loss function $\calL : \bbR^ {d \times k} \rightarrow \bbR$ is convex and there exists $\bW_{\ast}$ such that $\nabla \calL(\bW_{\ast}) = 0$.  
   \item $\calL^\ast - \eta K$ is convex on $\calS$, where $\calL^\ast$ is the Fenchel conjugate of $\calL$ as defined in Definition \ref{def: fenchel}.
    \end{enumerate}
\end{assumptions}

The second set of assumptions is leveraged to obtain the convergence rate of \eqref{alg: stoch} and also a bound on the distance between the convergence point of \eqref{alg: precond} from that of GD.

\begin{assumptions}\label{ass: main_erm}
    \begin{enumerate}   
        \item The smallest singular value of $\bX \bX^T \in \bbR^{n \times n}$ is nonzero, i.e., $\sigma_{n}(\bX \bX^T)  > 0$.
        \item $K(\cdot)$ has $L_K$-Lipschitz gradient as in Definition \ref{def: lip_grad}.
        \item The loss function $\calL(\bW)$ can be written as $\ell(\bX\bW - \bY)$ where $\ell: \bbR^{n \times k} \rightarrow \bbR$ is convex. In addition, assume that $\ell$ is separable: $$\calL(\bW) =  \frac{1}{n}
         \sum_{i,j=1}^{n,k}\ell_{i,j}  \left(\bx_i^T\bW^{(j)}-\bY_{ij}\right)$$
        Each $\ell_{i,j}$ satisfies $\ell_{i,j}(0) = 0, \ell'_{i,j}(0) = 0 $, is $\mu$-strongly convex for a $\mu > 0$ and has $M$-Lipschitz gradient as in Definition \ref{def: lip_grad} for $M>0$.    
    \end{enumerate}
\end{assumptions}

\begin{fact}\label{fact: gd_implicit}
    [Implicit Bias of SGD \cite{gunasekar2018characterizing, azizan2018stochastic, akhtiamov2024regularized}] For $\eta < \frac{1}{M \sigma_1(\bX \bX^T)}$, the iterations of \eqref{alg: precond} for $K(\bZ) = \frac{1}{2}\|\bZ\|_F^2$ to minimize $\ell(\bX \bW - \bY)$ converge to
    \begin{align}\label{opt: GD}
    \bW_{\text{GD},\infty} := \argmin_{\bW \in \bbR^{d\times k}} \|\bW - \bW_0\|_F^2  \quad           s.t. \quad \bX \bW = \bY 
    \end{align}
\end{fact}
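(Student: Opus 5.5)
The plan is to show two things: the iterates stay in the affine set $\bW_0 + \calS$, and they converge at a geometric rate to some point of $\calM$. The convex program \eqref{opt: GD} has a unique minimizer, characterized as the only point of $(\bW_0+\calS)\cap\calM$, so the limit must be $\bW_{\text{GD},\infty}$. The main tool is the separable strongly convex and smooth structure in Assumptions \ref{ass: main_erm}.

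\textbf{Step 1 (invariance).} For $K(\bZ)=\frac12\|\bZ\|_F^2$ we have $\nabla K=\mathrm{id}$. Writing $\calL(\bW)=\ell(\bX\bW-\bY)$ gives $\nabla\calL(\bW)=\bX^T\nabla\ell(\bX\bW-\bY)$. The update is therefore $\bW_t=\bW_{t-1}-\eta\bX^T\bG_{t-1}$ with $\bG_{t-1}=\nabla\ell(\bX\bW_{t-1}-\bY)$. By induction $\bW_t-\bW_0\in\calS$ for all $t$, and since $\calS$ is a closed subspace, the same holds for any limit.

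\textbf{Step 2 (characterizing the minimizer).} Assumption \ref{ass: main_erm}(1) says $\bX$ has full row rank, so $\calM\neq\emptyset$. The objective of \eqref{opt: GD} is strictly convex and $\calM$ is affine, so the minimizer is unique. The KKT conditions give $\bW-\bW_0=\bX^T\bL$ for some $\bL$, i.e.\ $\bW\in \bW_0+\calS$. Conversely, suppose $\bW,\bW'\in(\bW_0+\calS)\cap\calM$. Then $\bW-\bW'\in\calS\cap\ker(\bX)^k$, and this intersection is $\{\bzero\}$ because $\mathrm{range}(\bX^T)\perp\ker\bX$. Hence $(\bW_0+\calS)\cap\calM=\{\bW_{\text{GD},\infty}\}$.

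\textbf{Step 3 (geometric convergence; the main obstacle).} We work with the residual $\bZ_t=\bX\bW_t-\bY$, which satisfies $\bZ_t=\bZ_{t-1}-\eta\bX\bX^T\nabla\ell(\bZ_{t-1})$. The goal is a contraction, and I would proceed in two parts.

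First, the gradient of $\calL$ is $(M\sigma_1(\bX\bX^T)/n)$-Lipschitz. Since $\eta<1/(M\sigma_1(\bX\bX^T))$ is below $2$ divided by that constant (using $n\ge1$), the descent lemma gives
\begin{align*}
\calL(\bW_t)\le\calL(\bW_{t-1})-c\,\eta\,\|\bX^T\bG_{t-1}\|_F^2, \qquad c>0.
\end{align*}

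Second, we need a Polyak--Łojasiewicz inequality in place of strong convexity. Because $\ell_{ij}(0)=\ell_{ij}'(0)=0$ and $\ell_{ij}$ is $\mu$-strongly convex with $M$-Lipschitz derivative, we have
\begin{align*}
|\ell_{ij}'(z)|\ge\mu|z| \quad\text{and}\quad \ell_{ij}(z)\le\tfrac{M}{2}z^2 .
\end{align*}
Combining these with $\|\bX^T\bG\|_F^2\ge\sigma_n(\bX\bX^T)\|\bG\|_F^2$ yields
\begin{align*}
\|\nabla\calL(\bW)\|_F^2\ge \frac{2\mu^2\sigma_n(\bX\bX^T)}{nM}\,\calL(\bW).
\end{align*}
Hence $\calL(\bW_t)\le(1-\gamma)^t\calL(\bW_0)$ for some $\gamma\in(0,1)$. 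By strong convexity, $\|\bZ_t\|_F^2\le \frac{2n}{\mu}\calL(\bW_t)$, which decays geometrically.

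Checking that the constants give $\gamma\in(0,1)$ under exactly the stated step size is where care is needed. If the descent-lemma route is too lossy, the fallback is the column-wise mean-value form $\nabla\ell(\bZ)=\frac1n \bD\bZ$ with $\bD$ diagonal and entries in $[\mu,M]$. This gives $\bz_t=(\bI-\frac{\eta}{n}\bX\bX^T\bD)\bz_{t-1}$, and one then shows contraction in the $\bD$-weighted norm.

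\textbf{Step 4 (conclusion).} The increments satisfy $\|\bW_t-\bW_{t-1}\|_F\le\eta\sqrt{\sigma_1(\bX\bX^T)}\,\frac{M}{n}\|\bZ_{t-1}\|_F$. This bound is geometric, so the increments are summable, $\bW_t$ is Cauchy, and it converges to some $\bW_\infty$. Since $\bZ_t\to\bzero$, we get $\bX\bW_\infty=\bY$. Step 1 gives $\bW_\infty\in\bW_0+\calS$. By Step 2, $\bW_\infty=\bW_{\text{GD},\infty}$.
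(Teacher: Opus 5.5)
Your proof is correct. The paper only cites this Fact, so there is no in-paper proof to match, but your skeleton is the one the paper uses for general $K$ (proof of Theorem~\ref{thm: conv_gen}.II and the subsection on span-preserving preconditioners):
\begin{itemize}
\item geometric decay of $\calL(\bW_t)$;
\item increments controlled by the gradient, so $\{\bW_t\}$ is Cauchy;
\item a limit in $\calM$;
\item span invariance plus the KKT conditions of \eqref{opt: GD} to identify that limit.
\end{itemize}

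The difference is how you get the contraction. You use the primal descent lemma together with a Polyak--{\L}ojasiewicz inequality built entrywise from $|\ell_{ij}'(z)|\ge\mu|z|$ and $\ell_{ij}(z)\le\frac{M}{2}z^2$. The paper's machinery instead obtains $\calL(\bW_t)\le(1-\alpha\eta)^t\calL(\bW_0)$ from the Bregman identity of Proposition~\ref{prop: fund_id} via Lemma~\ref{lem: conv_rateK}. There, convexity of $\calL^\ast-\eta K$ on $\calS$ plays the role of smoothness, and convexity of $K-\alpha\calL^\ast$ plays the role of PL. For $K=\frac12\|\cdot\|_F^2$ the two arguments essentially coincide: since $\calL=0$ and $\nabla\calL=\bzero$ on $\calM$, the hypothesis of Lemma~\ref{lem: conv_rateK} reduces to $\frac12\|\nabla\calL(\bW_{t-1})\|_F^2\ge\alpha\,\calL(\bW_{t-1})$.

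Your version is more elementary and self-contained. Writing $\tilde{M}=M\sigma_1(\bX\bX^T)/n$, it covers every $\eta<2/\tilde{M}$. That is wider than the stated range, and wider than the range $\eta\le 1/\tilde{M}$ under which $\calL^\ast-\eta K$ is guaranteed convex on $\calS$. The paper's formulation is the one that extends to non-quadratic $K$.

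Two minor remarks, neither of which affects the Fact, since it only asserts convergence.
\begin{itemize}
\item The constant check you flagged is harmless. With $a=\eta\tilde{M}\in(0,2)$ and $c=1-a/2$ you get $\gamma=a(2-a)\,\frac{\mu^2\sigma_n(\bX\bX^T)}{M^2\sigma_1(\bX\bX^T)}\in(0,1]$. So the fallback is unnecessary. It would also be awkward as stated: the diagonal $\mathbf{D}$ changes with $t$, so no single weighted norm is being contracted.
\item Your PL constant $\tilde{\mu}\,\mu/M$ is weaker than $\tilde{\mu}$. You can recover $\tilde{\mu}$ by using the $\tilde{\mu}$-strong convexity of $\calL$ along $\calS$, comparing $\bW$ with its projection onto $\calM$ along $\calS$, as in the proof of Theorem~\ref{thm: conv_stoc}.II. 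This gives $\frac12\|\nabla\calL(\bW)\|_F^2\ge\tilde{\mu}\,\calL(\bW)$ and hence the $(1-\eta\tilde{\mu})$ rate that Lemma~\ref{lem: nablaL_bnd}.2 relies on later.
\end{itemize}
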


\section{Main Results}

The key to the proof of convergence is the following fundamental identity for the preconditioned gradient descent:

\begin{Proposition}\label{prop: fund_id}
    For $K$ and $\calL$ satisfying Assumptions 1.1 and 1.2, and for any $\bW \in \bbR^{d\times k}$ from Definition \ref{def: breg} and $\{\bW_t\}_{t=1}$ generated according to \eqref{alg: precond} for $B= n$:
    \begin{align}\label{eq: fund_id}
     \tilde{D}_{\calL}(\bW,\bW_{t-1}) &=\tilde{D}_{\calL}(\bW,\bW_t) + \eta K(\nabla \calL(\bW_{t}))  \nonumber  \\
     &- \eta K(\nabla \calL(\bW))  +\tilde{D}_{\calL}(\bW_t,\bW_{t-1}) \nonumber \\ 
     &- \eta D_{K}\Bigl(\nabla \calL(\bW_t),\nabla \calL(\bW_{t-1})\Bigr)  \nonumber \\
     &+ \eta D_{K}\Bigl(\nabla \calL(\bW),\nabla \calL(\bW_{t-1})\Bigr)
\end{align}
\end{Proposition}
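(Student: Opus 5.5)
The identity is purely algebraic: a three-point identity for $\tilde{D}_{\calL}$ combined with the update rule \eqref{alg: precond}. I will write $\bG := \nabla \calL(\bW)$ and $\bG_s := \nabla\calL(\bW_s)$, and $\langle \bA,\bB\rangle := \tr(\bA^T\bB)$. First I check that every term is well defined. Since $\calL$ is convex and differentiable (Assumption 1.2), Fenchel--Young gives equality
\begin{align*}
\calL^*(\nabla\calL(\bA)) = \langle \bA, \nabla\calL(\bA)\rangle - \calL(\bA),
\end{align*}
so $\calL^*$ is finite at every gradient appearing in Lemma \ref{lem: gen_breg}. I will not need $\tilde{D}_{\calL}(\bA,\bB) = D_{\calL}(\bB,\bA)$; I will work directly from the defining formula in Lemma \ref{lem: gen_breg}.

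\textbf{Step 1 (three-point identity).} I expand
\begin{align*}
\tilde{D}_{\calL}(\bW,\bW_{t-1}) - \tilde{D}_{\calL}(\bW,\bW_t) - \tilde{D}_{\calL}(\bW_t,\bW_{t-1})
\end{align*}
using the definition. The conjugate terms $\calL^*(\bG)$, $\calL^*(\bG_t)$ and $\calL^*(\bG_{t-1})$ each appear once with each sign, so they cancel. The remaining inner products are
\begin{align*}
-\langle \bW_{t-1}, \bG-\bG_{t-1}\rangle + \langle \bW_t, \bG-\bG_t\rangle + \langle \bW_{t-1}, \bG_t-\bG_{t-1}\rangle .
\end{align*}
These collapse to $\langle \bW_t-\bW_{t-1}, \bG-\bG_t\rangle$. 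Substituting the update $\bW_t-\bW_{t-1} = -\eta\nabla K(\bG_{t-1})$ gives
\begin{align*}
\tilde{D}_{\calL}(\bW,\bW_{t-1}) - \tilde{D}_{\calL}(\bW,\bW_t) - \tilde{D}_{\calL}(\bW_t,\bW_{t-1}) = \eta\langle \nabla K(\bG_{t-1}), \bG_t-\bG\rangle .
\end{align*}

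\textbf{Step 2 (the $K$-terms).} I expand the remaining $K$-terms on the right of \eqref{eq: fund_id} using Definition \ref{def: breg}:
\begin{align*}
\eta\Bigl[K(\bG_t) - K(\bG) - D_K(\bG_t,\bG_{t-1}) + D_K(\bG,\bG_{t-1})\Bigr].
\end{align*}
The values $K(\bG_t)$, $K(\bG)$ and $K(\bG_{t-1})$ all cancel, leaving
\begin{align*}
\eta\Bigl[\langle \nabla K(\bG_{t-1}), \bG_t-\bG_{t-1}\rangle - \langle\nabla K(\bG_{t-1}), \bG-\bG_{t-1}\rangle\Bigr] = \eta\langle\nabla K(\bG_{t-1}), \bG_t-\bG\rangle .
\end{align*}
This is exactly the expression obtained in Step 1. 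Rearranging gives \eqref{eq: fund_id}.

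I do not expect a genuine obstacle; the argument is bookkeeping. The only points needing care are the sign conventions in the argument order of $\tilde{D}_{\calL}$ versus $D_K$, and the finiteness of $\calL^*$ at gradients, which the Fenchel--Young equality settles. Assumption 1.1 is used only for the differentiability of $K$. Assumption 1.3 plays no role here; it will matter later, when the $-\eta D_K$ term is combined with $\tilde{D}_{\calL}(\bW_t,\bW_{t-1})$ to obtain a nonnegative quantity.
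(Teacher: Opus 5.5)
Your proof is correct and follows essentially the same route as the paper: the three-point identity for $\tilde{D}_{\calL}$ with the update rule substituted, then matching the $K$-terms. The only difference is cosmetic. The paper passes through a second three-point identity for $D_K$ and then expands $D_K(\nabla \calL(\bW),\nabla \calL(\bW_t))$, whereas you expand the $K$-terms directly, which is slightly cleaner.
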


Note that Proposition \ref{prop: fund_id} extends the Descent Lemma (Lemma 3.8) in \cite{maddison2021dual} from an inequality to an equality. Identities of this kind are used to prove the convergence of stochastic mirror descent and to derive the convergence rate \cite{azizan2018stochastic, Varma2025Exponential}.

\subsection{General Full-Batch Dual Space Preconditioner}

In this section, we focus on the algorithm presented in \eqref{alg: precond}.


\begin{theorem} \label{thm: conv_gen}
    (General Preconditioners: Full-Batch) Consider the iterates of \eqref{alg: precond}. Let $\bW_{\text{GD},\infty}$ as in \eqref{opt: GD}.
    
    \begin{enumerate}[I)]
        \item Under Assumptions \ref{ass: main_gen} and \ref{ass: main_erm}.1, $ \|\nabla \calL(\bW_t)\|_F \rightarrow 0 $ as $t \rightarrow \infty$. Subsequently,
        \begin{align*}
            r:=\sup_{t\ge 1} \|\nabla \calL(\bW_t)\|_F < \infty
        \end{align*}
        \item Under Assumptions \ref{ass: main_gen} and \ref{ass: main_erm}, if there exists an $\alpha > 0$ such that $(K - \alpha \calL^\ast)(\bZ)$ is convex on $\calB_r \cap \calS$,  there exists $\bW_{\infty} \in \calM$ in \eqref{def: manif} such that $\bW_t \rightarrow \bW_{\infty}$ as $t \rightarrow \infty$. Moreover, for $t\ge 1$,
        \begin{align*}
            \|\bW_{\infty} - \bW_t \|^2_F \le \eta^2 \frac{2L_K^2\tilde{M} \calL(\bW_0) }{(1- \sqrt{1 - \alpha \eta})^2} (1-\alpha \eta)^{t}
        \end{align*}
        
        If $\nabla K(\nabla L(\bW_t)) \in \calS$ for all $t$, then $\bW_{\infty} = \bW_{\text{GD},\infty}$.
        
        \item Let $\tilde{M}:=  \frac{M \sigma_1(\bX\bX^T)}{n}$ and $ \tilde{\mu} :=  \frac{\mu \sigma_n(\bX\bX^T)}{n} $. Then
        \begin{align*}
             & \|\bW_{\text{GD},\infty} - \bW_{\infty}\|_{F} \le  \tilde{M} \eta  \|\bW_0 -\bW_{\text{GD},\infty} \|_{F} \\
    &\cdot \Bigl( \frac{\sqrt{2} }{1-\sqrt{1-\eta \tilde{\mu}}} + \frac{L_K + 2}{1-\sqrt{1-\alpha \eta}}\Bigr)
        \end{align*}
       And
        \begin{align*}
            &\|\bW_0 - \bW_{\infty}\|_F \le    \|\bW_0 -\bW_{\text{GD},\infty} \|_F \\
            &\cdot \Bigl(1+ \frac{\sqrt{2} \tilde{M} \eta }{1-\sqrt{1-\eta\tilde{\mu} }} + \frac{\tilde{M} \eta (L_K+2)}{1-\sqrt{1-\alpha \eta}}\Bigr)
        \end{align*}

    \end{enumerate}
\end{theorem}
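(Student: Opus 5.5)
The plan is to drive everything from the fundamental identity of Proposition~\ref{prop: fund_id}, applied with the comparison point $\bW=\bW_\ast$, a minimizer, so that $\nabla\calL(\bW_\ast)=\bzero$.

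\textbf{Part I.} With $\bW=\bW_\ast$ we have $K(\nabla\calL(\bW_\ast))=K(\bzero)=0$. The last term becomes $\eta D_K(\bzero,\nabla\calL(\bW_{t-1}))$. The difference
\[
\tilde D_{\calL}(\bW_t,\bW_{t-1})-\eta D_K\bigl(\nabla\calL(\bW_t),\nabla\calL(\bW_{t-1})\bigr)
\]
is a Bregman divergence of $\calL^\ast-\eta K$ between gradients that lie in $\calS$ (because $\nabla\calL(\bW)=\bX^T\nabla\ell(\cdot)$). By Assumption~\ref{ass: main_gen}.3 this difference is nonnegative. Hence
\[
\tilde D_{\calL}(\bW_\ast,\bW_t)+\eta K(\nabla\calL(\bW_t))\le \tilde D_{\calL}(\bW_\ast,\bW_{t-1})+\eta D_K\bigl(\bzero,\nabla\calL(\bW_{t-1})\bigr).
\]
I will rewrite $D_K(\bzero,\bG)=\tr(\nabla K(\bG)^T\bG)-K(\bG)$ and reorganize this into a telescoping Lyapunov inequality. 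The aim is to show that $\tilde D_{\calL}(\bW_\ast,\bW_t)$ is nonincreasing, with decrement at least $\eta K(\nabla\calL(\bW_t))$. A cleaner route may be to use the three-point form with $\bW_\ast$ and Fenchel--Young directly: $\tilde D_{\calL}(\bW_\ast,\bW_{t-1})-\tilde D_{\calL}(\bW_\ast,\bW_t)\ge$ a positive function of $\nabla\calL(\bW_t)$. Summing over $t$ gives $\sum_t K(\nabla\calL(\bW_t))<\infty$. Since $K$ is positive off $\bzero$, coercive and continuous, this forces $\nabla\calL(\bW_t)\to\bzero$, and boundedness $r<\infty$ follows.

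\textbf{Part II.} For linear convergence I will use convexity of $K-\alpha\calL^\ast$ on $\calB_r\cap\calS$. It gives $D_K(\bzero,\bG)\ge\alpha D_{\calL^\ast}(\bzero,\bG)$, and $D_{\calL^\ast}(\bzero,\bG)$ is exactly $\tilde D_{\calL}(\bW_\ast,\cdot)$, namely the suboptimality $\calL(\bW_t)-\calL(\bW_\ast)$. Plugging this into the identity yields
\[
\calL(\bW_t)-\calL_\ast\le(1-\alpha\eta)\bigl(\calL(\bW_{t-1})-\calL_\ast\bigr),
\]
so $\calL(\bW_t)\le(1-\alpha\eta)^t\calL(\bW_0)$, using $\calL_\ast=0$ under Assumption~\ref{ass: main_erm}. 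Next I bound the step length:
\[
\|\bW_{t+1}-\bW_t\|_F=\eta\|\nabla K(\nabla\calL(\bW_t))\|_F\le\eta L_K\|\nabla\calL(\bW_t)\|_F\le\eta L_K\sqrt{2\tilde M\calL(\bW_t)},
\]
where the last inequality is the smoothness bound for $\calL$ with constant $\tilde M$. Summing this geometric series in $\sqrt{1-\alpha\eta}$ shows the sequence is Cauchy and gives exactly the stated bound with denominator $(1-\sqrt{1-\alpha\eta})^2$. The limit satisfies $\calL(\bW_\infty)=0$, hence $\bX\bW_\infty=\bY$ by strong convexity of the $\ell_{ij}$. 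If every update lies in $\calS$, then $\bW_\infty-\bW_0\in\calS$. The unique point of $\calM\cap(\bW_0+\calS)$ is the minimum-norm-distance solution, so $\bW_\infty=\bW_{\text{GD},\infty}$.

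\textbf{Part III.} This is the main obstacle, because the updates need not lie in $\calS$. I will decompose each update into its projection onto $\calS$ and onto $\calS^\perp$. The $\calS$-component is compared with a GD step, writing $\nabla K(\bG)=\bG+(\nabla K(\bG)-\bG)$. The orthogonal drift is then bounded by $\eta\|\nabla K(\bG_t)-\nabla K(\bzero)\|_F+\eta\|\bG_t\|_F\le\eta(L_K+1)\|\bG_t\|_F$ and summed via Part II. The initial suboptimality is bounded by $\calL(\bW_0)\le\frac{\tilde M}{2}\|\bW_0-\bW_{\text{GD},\infty}\|_F^2$, which produces the factor $\tilde M\eta(L_K+2)/(1-\sqrt{1-\alpha\eta})$. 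The $\sqrt2/(1-\sqrt{1-\eta\tilde\mu})$ term comes from running GD's own linear rate $(1-\eta\tilde\mu)$ (strong convexity restricted to $\calS$) and summing its step lengths. The second inequality follows from the triangle inequality. The delicate point is matching the exact constants; I expect the split between GD-trajectory and drift terms to be where care is needed.
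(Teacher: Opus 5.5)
\textbf{Parts I and II.} These follow the paper's argument: Proposition~\ref{prop: fund_id} at a minimizer $\bW_\ast$, nonnegativity of the bracket from Assumption~\ref{ass: main_gen}.3, telescoping to $\sum_t K(\nabla\calL(\bW_t))<\infty$, the contraction $\calL(\bW_t)\le(1-\alpha\eta)\calL(\bW_{t-1})$, the bound $\|\nabla\calL(\bW_t)\|_F^2\le 2\tilde M\calL(\bW_t)$, a geometric Cauchy sum, and the span/KKT argument.

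However, you have misread a sign in Part I. In \eqref{eq: fund_id} the term $\eta D_K(\nabla\calL(\bW),\nabla\calL(\bW_{t-1}))$ is on the same side as $\tilde D_{\calL}(\bW,\bW_t)$. So for $\bW=\bW_\ast$ it is part of the decrement:
\[
\tilde D_{\calL}(\bW_\ast,\bW_{t-1})-\tilde D_{\calL}(\bW_\ast,\bW_t)\ge\eta K(\nabla\calL(\bW_t))+\eta D_K\bigl(\bzero,\nabla\calL(\bW_{t-1})\bigr).
\]
Your inequality adds $\eta D_K(\bzero,\cdot)$ to the right-hand side instead. It is true but useless. For $K=\frac12\|\cdot\|_F^2$ that term equals $\eta K(\nabla\calL(\bW_{t-1}))$, so you only learn that $\tilde D_{\calL}(\bW_\ast,\bW_t)+\eta K(\nabla\calL(\bW_t))$ is nonincreasing, with no summable decrement. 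It also contradicts your own Part II: there the contraction needs $\eta D_K(\bzero,\nabla\calL(\bW_{t-1}))\ge\alpha\eta\tilde D_{\calL}(\bW_\ast,\bW_{t-1})$ to sit on the decrement side. Read correctly, every term on the right of \eqref{eq: fund_id} is nonnegative, and your stated aim follows immediately. No Lyapunov reorganization or Fenchel--Young step is needed.

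\textbf{Part III.} Here you genuinely depart from the paper. The paper introduces $\hbW_t=\bW_{t-1}-\eta\nabla\calL(\bW_{t-1})$ and unrolls a triangle-inequality recursion against the GD trajectory. In effect it bounds $\|\bW_{\text{GD},\infty}-\bW_\infty\|_F$ by the summed step lengths of both trajectories, which is where $L_K+2$ and the term $\sqrt2/(1-\sqrt{1-\eta\tilde\mu})$ come from.

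Your $\calS/\calS^\perp$ split is a better route, but as written it omits the fact that makes it close, and instead proposes an unspecified comparison of the $\calS$-component with GD. The missing fact: both limits lie in $\calM$. So with $P:=\bX^T(\bX\bX^T)^{-1}\bX$ they share the $\calS$-component $P\bW_\infty=P\bW_{\text{GD},\infty}=\bX^T(\bX\bX^T)^{-1}\bY$, while $\bW_{\text{GD},\infty}-\bW_0\in\calS$. Hence $\bW_\infty-\bW_{\text{GD},\infty}=(\bI-P)(\bW_\infty-\bW_0)$ is exactly the accumulated orthogonal drift. Your per-step drift bound (or simply $\|\nabla K(\bG)\|_F\le L_K\|\bG\|_F$) together with Part II then gives
\[
\|\bW_\infty-\bW_{\text{GD},\infty}\|_F\le\eta L_K\sum_{t\ge0}\|\nabla\calL(\bW_t)\|_F\le\frac{\eta L_K\tilde M\|\bW_0-\bW_{\text{GD},\infty}\|_F}{1-\sqrt{1-\alpha\eta}}.
\]
This is already smaller than the stated right-hand side. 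So the $\calS$-component never needs to be compared with GD, GD's own rate is not needed, and there are no exact constants to match.

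The second inequality follows from the triangle inequality. In fact Pythagoras applies, since $\bW_0-\bW_{\text{GD},\infty}\in\calS$ and $\bW_{\text{GD},\infty}-\bW_\infty\in\calS^\perp$.

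The paper's trajectory comparison never uses that $\bW_{\text{GD},\infty}$ is the orthogonal projection of $\bW_0$ onto $\calM$. Your route, completed this way, exploits that fact and gives a shorter proof with a sharper constant.
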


\begin{corollary}
    If $\nabla K(\bZ) = \bZ \tilde{K}(\bZ)$ for some matrix-valued function $\tilde{K}$ where $K$ satisfies Assumptions \ref{ass: main_gen} \& \ref{ass: main_erm}, then the iterations of \eqref{alg: precond} converge to $\bW_{\text{GD}, \infty}$. We refer to such preconditioners as \textbf{span-preserving}. For instance, if $K(\bZ) = \phi(\bZ^T \bZ)$ is convex and also satisfies the assumptions of Theorem \ref{thm: conv_gen}, then $\nabla K(\bZ) = 2\bZ \nabla \phi (\bZ^T \bZ)$ which implies $\nabla K(\nabla \calL (\bW_t)) \in \calS$ for all $t\ge 1$. Hence, such $K(\bZ)$ exhibits the same implicit bias as gradient descent. We will discuss an example of such $K(\bZ)$ in Section IV.
\end{corollary}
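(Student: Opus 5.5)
The plan is to reduce the Corollary to the last sentence of Theorem \ref{thm: conv_gen}(II). That sentence says: if $\nabla K(\nabla \calL(\bW_t)) \in \calS$ for every $t$, then $\bW_\infty = \bW_{\text{GD},\infty}$. Since $K$ is assumed to satisfy Assumptions \ref{ass: main_gen} and \ref{ass: main_erm} together with the hypotheses of Theorem \ref{thm: conv_gen}, the theorem already gives a limit point $\bW_\infty \in \calM$. So the only thing left to check is the span condition, for each of the two classes of preconditioners.

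\textbf{Step 1: the gradient lies in $\calS$.} By Assumption \ref{ass: main_erm}.3, $\calL(\bW) = \ell(\bX\bW - \bY)$. The chain rule then gives
\begin{align*}
\nabla \calL(\bW) = \bX^T \nabla \ell(\bX\bW - \bY),
\end{align*}
so $\nabla \calL(\bW) = \bX^T \bL$ with $\bL := \nabla \ell(\bX\bW-\bY) \in \bbR^{n\times k}$. By \eqref{def: span_sub}, this means $\nabla\calL(\bW) \in \calS$ for every $\bW$, and in particular at every iterate $\bW_t$.

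\textbf{Step 2: span-preservation.} Suppose $\nabla K(\bZ) = \bZ\tilde K(\bZ)$, where $\tilde K(\bZ) \in \bbR^{k\times k}$ so that the product makes sense. Put $\bZ = \bX^T\bL$. Then
\begin{align*}
\nabla K(\bZ) = \bX^T\bigl(\bL \tilde K(\bZ)\bigr),
\end{align*}
and $\bL\tilde K(\bZ) \in \bbR^{n\times k}$, so $\nabla K(\bZ) \in \calS$. Right-multiplication never leaves $range(\bX^T)^k$, because it only mixes the columns. Applying this at $\bZ = \nabla\calL(\bW_{t})$ for every $t$ gives the hypothesis of Theorem \ref{thm: conv_gen}(II). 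Hence $\bW_\infty = \bW_{\text{GD},\infty}$.

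As an extra check, one can argue directly. Each update $\bW_t - \bW_{t-1}$ lies in $\calS$, so $\bW_\infty - \bW_0 \in \calS$ because $\calS$ is closed. Also $\bW_\infty \in \calM$. These are exactly the KKT conditions of \eqref{opt: GD}, namely orthogonality of $\bW_\infty - \bW_0$ to $\ker \bX$ column by column. Since \eqref{opt: GD} is strongly convex, its minimizer is unique, which again gives $\bW_\infty = \bW_{\text{GD},\infty}$.

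\textbf{Step 3: the example $K(\bZ) = \phi(\bZ^T\bZ)$.} Here one computes the differential:
\begin{align*}
dK = \tr\bigl(\nabla\phi(\bZ^T\bZ)^T (d\bZ^T \bZ + \bZ^T d\bZ)\bigr).
\end{align*}
Taking $\nabla\phi$ symmetric, which is legitimate because $\phi$ only needs to be defined on symmetric matrices and can be symmetrized, this gives $\nabla K(\bZ) = 2\bZ\nabla\phi(\bZ^T\bZ)$. That is the form in Step 2 with $\tilde K(\bZ) = 2\nabla\phi(\bZ^T\bZ)$.

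The only delicate point is this symmetrization of $\nabla\phi$. The underlying issue is that $\tilde K$ must multiply on the right, with its value in $\bbR^{k\times k}$. Once that is settled, everything else is a direct invocation of Theorem \ref{thm: conv_gen}.
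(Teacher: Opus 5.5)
Your proposal is correct and is essentially the paper's own argument. You write $\nabla\calL(\bW_t)=\bX^T\bL_t$, note that right-multiplication by $\tilde K$ keeps $\nabla K(\nabla\calL(\bW_t))$ in $\calS$, and conclude with the last claim of Theorem~\ref{thm: conv_gen}.II, whose proof is exactly your KKT/uniqueness argument for \eqref{opt: GD} and which the span-preserving discussion of Section~V restates; this gives $\bW_\infty=\bW_{\text{GD},\infty}$. As in the paper, you take the existence of $\bW_\infty$ from Theorem~\ref{thm: conv_gen}.II and hence from its local $\alpha$-convexity hypothesis, which you rightly make explicit. That hypothesis is actually avoidable here: $\bW_t-\bW_{\text{GD},\infty}\in\calS$, $\nabla\calL(\bW_t)\to\bzero$ by part I, $\sigma_n(\bX\bX^T)>0$ and strong convexity of the $\ell_{i,j}$ already force $\bW_t\to\bW_{\text{GD},\infty}$. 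Your symmetrization of $\nabla\phi$ is also unnecessary, since in general $\nabla K(\bZ)=\bZ\bigl(\nabla\phi(\bZ^T\bZ)+\nabla\phi(\bZ^T\bZ)^T\bigr)$, which is still span-preserving.
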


Several remarks are in order.

\begin{remark}
    Note that Theorem \ref{thm: conv_gen} only proves the existence of the convergence point. Whether $\bW_{\infty}$ is independent of the step size $\eta$ requires further investigation. As we will see in the experiments section, this is not generally the case. 
\end{remark}

\begin{remark}
    Essentially, the third part of Theorem \ref{thm: conv_gen} states that in order to obtain a different solution from that of GD, the value of the loss at initialization should be non-negligible; in other words and roughly speaking, using the update rule \eqref{alg: precond} for fine-tuning, when the initialization point is already close to the GD solution, does not necessarily yield a qualitatively different solution from that of GD. 
\end{remark}

\begin{remark}
In the bounds provided in the third part of Theorem \ref{thm: conv_gen}, letting $\eta \rightarrow 0$ yields the coefficients $\frac{2\sqrt{2}
}{\tilde{\mu} } + \frac{2(L_K+2)}{\alpha}, 1+\frac{2\sqrt{2}
}{\tilde{\mu} } + \frac{2(L_K + 2)}{\alpha} < \infty$, which are non-vacuous.
\end{remark}

\begin{remark}
    Note that Assumptions \ref{ass: main_gen}.3 and the convexity of $(K - \alpha \calL^\ast)(\bZ)$ on $\calB_r \cap \calS$ are similar to the relative smoothness and strong convexity assumptions of \cite{maddison2021dual}. There are two important distinctions. First, the convergence of the loss does not imply the convergence of the weights as $\operatorname{null}(\bX) \neq \emptyset$. In other words, $\bX \bW_t \rightarrow \bY$ does not imply the convergence of $\bW_t \rightarrow \bW_{\infty}$ in the overparameterized regime. This is the case in \cite{maddison2021dual,laude2025anisotropic, oikonomidis2025nonlinearly,oikonomidis2025nonlinearlymom}. The second difference is that  $(K - \alpha \calL^\ast)(\bZ)$ has to be convex locally, by virtue of the first part of Theorem \ref{thm: conv_gen}, $\calB_r$ always exists. As we will see, for the squared loss, this relaxation allows for preconditioners which are not globally strongly convex.
\end{remark}

\subsection{Isotropic Preconditioner}

For the isotropic preconditioner, one can go beyond the full-batch version and consider the stochastic variant of the algorithm as in \eqref{alg: stoch} and precisely characterize the convergence point. Let $\bb_t$ be a random subset of data indices of cardinality $B$, selected independently of previous steps at iteration $t$. Let $\calF_t$ be the $\sigma$-algebra generated by $\bb_0, \cdots, \bb_t$.

\begin{theorem}\label{thm: conv_stoc}
    (Isotropic Preconditioner) Assume that $K(\cdot)$ is isotropic, that is, $ K(\cdot) = h(\|\cdot\|_F)$ for some differentiable increasing convex $h:\bbR \rightarrow \bbR$ satisfying $h'(0)=0$. Suppose the sampling be such that $\bbE [\calL_{t} ^{(B)} (\bW_{t-1}) | \calF_{t-1} ]= \calL(\bW_{t-1})$. Then for $1 \le B \le n$, under Assumptions \ref{ass: main_gen} and \ref{ass: main_erm}, the following statements hold.
    \begin{enumerate}[I)]
        \item If $\eta \le \frac{1}{L_K \tilde{M}_B}$, the iterations always remain bounded, that is, with probability 1, for any $\bW_{\ast} \in \calM$ \begin{align*}
        \|\bW_{t} -  \bW_{\ast} \|_F^2 \le \|\bW_{0} -  \bW_{\ast}\|^2_F, \quad t \ge 1
    \end{align*} 
        Let $r' := \tilde{M}_B \|\bW_{0} -  \bW_{\text{GD},\infty}\|_F $.
        \item For
        $m_K:= \inf_{0 < \|\bZ\|_F \le r'} \frac{h'(\|\bZ\|_F)}{\|\bZ\|_F} $, if $m_K > 0$ then,
          \begin{align}\label{ineq: lin_conv}
           & \bbE \|\bW_{t} - \bW_{\text{GD},\infty}\|_F^2 \nonumber \\
           &\le  \|\bW_{0} - \bW_{\text{GD},\infty}\|_F^2 \Bigl(1 + \eta^2 L_K^2 \tilde{M}_B^2 - 2\eta  m_K \tilde{\mu}  \Bigr)^t 
        \end{align}
        Here, the expectation is taken with respect to the randomness in the batch selection.
    \end{enumerate}

\end{theorem}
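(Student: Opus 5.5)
The plan is to exploit the fact that an isotropic preconditioner is span-preserving and acts on the stochastic gradient as a positive scalar multiple: $\nabla K(\bG) = \frac{h'(\|\bG\|_F)}{\|\bG\|_F}\bG$ for $\bG\neq\bzero$, and $\nabla K(\bzero)=\bzero$. Write $\bG_t := \nabla\calL^{(B)}_t(\bW_{t-1}) = \frac{1}{B}\bX_{\bb_t}^T \bL_t$, where $\bL_t$ has entries $\ell'_{i,j}(\bx_i^T\bW_{t-1}^{(j)}-\bY_{ij})$, and put $c_t := h'(\|\bG_t\|_F)/\|\bG_t\|_F \ge 0$. Then $\bW_t - \bW_{t-1} = -\eta c_t \bG_t \in \calS$, so every iterate satisfies $\bW_t - \bW_0 \in \calS$. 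The standard characterization of \eqref{opt: GD} states that $\bW_{\text{GD},\infty}$ is the unique point of $\calM$ with $\bW_{\text{GD},\infty}-\bW_0\in\calS$, so $\bW_t - \bW_{\text{GD},\infty}\in\calS$ for all $t$. This handles the implicit bias, and what remains is a distance-contraction argument.

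For part I, fix $\bW_\ast\in\calM$, set $\Delta_t:=\bW_t-\bW_\ast$, and expand
\[
\|\Delta_t\|_F^2=\|\Delta_{t-1}\|_F^2-2\eta c_t\,\tr(\bG_t^T\Delta_{t-1})+\eta^2 c_t^2\|\bG_t\|_F^2 .
\]
Since $\bX_{\bb_t}\bW_\ast=\bY_{\bb_t}$, the cross term equals $\frac1B\sum_{i\in\bb_t,j}\ell'_{i,j}(e_{ij})e_{ij}$ with $e_{ij}$ the residuals. By $M$-smoothness (co-coercivity) with $\ell'_{i,j}(0)=0$, we have $\ell'(e)e\ge \ell'(e)^2/M$. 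This gives $\tr(\bG_t^T\Delta_{t-1})\ge \frac{1}{M}\cdot\frac1B\|\bL_t\|_F^2\ge \|\bG_t\|_F^2/\tilde M_B$, where $\tilde M_B := M\max_{|\bb|=B}\sigma_1(\bX_{\bb}\bX_{\bb}^T)/B$ is the presumed definition. We also have $c_t\le L_K$ because $\nabla K$ is $L_K$-Lipschitz with $\nabla K(\bzero)=\bzero$. Hence the last two terms sum to at most $\eta c_t\|\bG_t\|_F^2(\eta L_K - 2/\tilde M_B)\le 0$ whenever $\eta\le 1/(L_K\tilde M_B)$. This gives monotonicity almost surely.

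For part II, take $\bW_\ast=\bW_{\text{GD},\infty}$. From part I, $\|\bG_t\|_F\le \tilde M_B\|\Delta_{t-1}\|_F\le r'$ (using $\|\bG_t\|_F^2\le \tilde M_B\tr(\bG_t^T\Delta_{t-1})$ and Cauchy–Schwarz). It follows that $c_t\ge m_K$ for every $t$; when $\bG_t=\bzero$ the step is trivial and the bound still holds. I bound the quadratic term crudely by $\eta^2L_K^2\tilde M_B^2\|\Delta_{t-1}\|_F^2$. For the cross term, first use $c_t\ge m_K$ and the nonnegativity of $\tr(\bG_t^T\Delta_{t-1})$ to get $-2\eta c_t\tr(\cdot)\le -2\eta m_K\tr(\bG_t^T\Delta_{t-1})$, and then take the conditional expectation given $\calF_{t-1}$. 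Unbiasedness makes the expected cross term equal to $\tr(\nabla\calL(\bW_{t-1})^T\Delta_{t-1})$. Strong convexity of each $\ell_{i,j}$ gives $\ell'(e)e\ge \mu e^2$, so this is at least $\frac{\mu}{n}\|\bX\Delta_{t-1}\|_F^2\ge \tilde\mu\|\Delta_{t-1}\|_F^2$. The last inequality is the step that needs $\Delta_{t-1}\in\calS$, since only on $\calS$ does $\|\bX\Delta\|_F^2\ge \sigma_n(\bX\bX^T)\|\Delta\|_F^2$ hold. Combining, $\bbE[\|\Delta_t\|_F^2\mid\calF_{t-1}]\le(1+\eta^2L_K^2\tilde M_B^2-2\eta m_K\tilde\mu)\|\Delta_{t-1}\|_F^2$, and iterating via the tower property yields \eqref{ineq: lin_conv}.

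The main obstacle is the cross term in expectation, because $c_t$ is random and correlated with $\bG_t$. The unbiasedness assumption cannot be applied to $c_t\bG_t$ directly. The fix is the order described above: replace $c_t$ by its deterministic lower bound $m_K$ first, which is valid because the per-sample inner products $\ell'(e)e$ are nonnegative, and only then take the expectation. Making $c_t \ge m_K$ legitimate in turn requires the a.s. bound $\|\bG_t\|_F\le r'$ from part I. The remaining point to verify carefully is the precise definition of $\tilde M_B$ used in the bound $\|\bG_t\|_F^2\le \tilde M_B\tr(\bG_t^T\Delta_{t-1})$.
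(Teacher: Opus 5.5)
Your proposal is correct and follows the paper's proof essentially step for step. It uses the same expansion of $\|\bW_t-\bW_\ast\|_F^2$, the bound $c_t\le L_K$, and the replacement of $c_t$ by $m_K$ before conditioning, justified by the a.s.\ bound $\|\bG_t\|_F\le r'$ from part I and nonnegativity of the inner product; it then applies unbiasedness, strong convexity of the $\ell_{i,j}$, and $\bW_{t-1}-\bW_{\text{GD},\infty}\in\calS$. The only minor difference is the cross-term bound in part I: you use co-coercivity to get $\tr(\bG_t^T\Delta_{t-1})\ge\|\bG_t\|_F^2/\tilde M_B$, whereas the paper uses Lemma~\ref{lem: gen_dec_lem} plus convexity to get the weaker $\|\bG_t\|_F^2/(2\tilde M_B)$. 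Both suffice for $\eta\le 1/(L_K\tilde M_B)$. The paper's $\tilde M_B = M\sigma_1(\bX\bX^T)/B$ dominates your batch-wise constant, so your argument goes through with the paper's definition as well.
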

\begin{remark}
     $\eta^\ast = \frac{B^2}{n} \cdot \frac{m_K}{L^2_K}\cdot \frac{\mu}{M^2} \cdot \frac{\sigma_n(\bX \bX^T)}{\sigma^2_1(\bX \bX^T)}$ minimizes the right hand side of \eqref{ineq: lin_conv} and the upper bound reduces to
        \begin{align*}
             \bbE\|\bW_{t} - \bW_{\infty}\|_F^2 &\le  \|\bW_{0} - \bW_{\infty}\|_F^2 \\
             &\cdot \Bigl(1-\frac{m_K^2}{L_K^2} \frac{B^2}{n^2} \frac{\mu^2}{M^2} \frac{\sigma^2_n(\bX \bX^T)}{\sigma^2_1(\bX \bX^T)}\Bigr)^t
        \end{align*}
\end{remark}

We observe that using a larger batch size improves the convergence rate, and among various losses $\calL$, the squared loss $\ell := \|\cdot\|_F^2$ yields the best $\frac{\mu}{M} = 1$ ratio. Note, however, that the optimal learning is smaller for the non-quadratic preconditioners. In Fig. \ref{fig:compare_lin} we compare the convergence rates of several preconditioners and we observe that their rates are similar. 


We defer the precise description and discussion to the Section \ref{ex4}.

\begin{figure}
    \centering
    \includegraphics[width=0.8\linewidth]{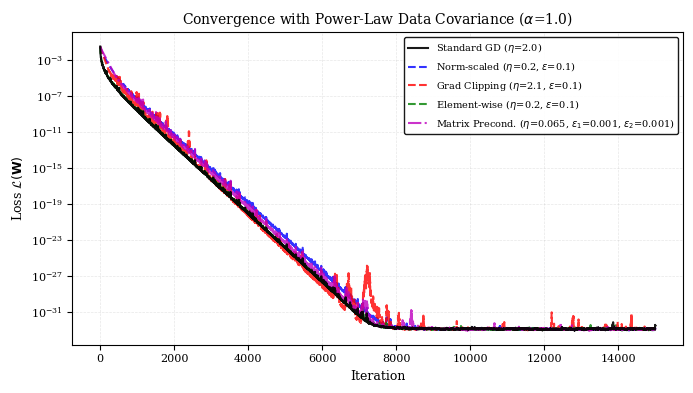}
    \caption{Convergence of the loss across iterations for various preconditioner algorithms. Here $\epsilon$ and $\eta$ are tuned to obtain a relatively good rate compared to that of GD.}
    \label{fig:compare_lin}
\end{figure}

\begin{remark}
     In \cite{ma2018power}, the authors derive the convergence of SGD for convex losses with arbitrary batch-size in the interpolation regime. Our results extend this result to isotropic preconditioners; we leave the investigation of optimality of the rate for future work.
\end{remark}

\subsection{Analytical Example}
In this section, we provide an analytical example where the choice of the constant step-size alters the point of convergence of a nonlinear preconditioner. Now, let us consider the following problem where $\bx, \bw \in \bbR^d, y \in \bbR$.
\begin{align*}
    \min_{\bw \in \bbR^d} \frac{1}{2} (\bx^T \bw - y)^2
\end{align*}
We choose the preconditioner $   K(\bz) := \frac{\epsilon}{2} \|\bz\|_2^2 + \sum_{i=1}^d F(z_i)$ where
\begin{align*}
    F(z; \epsilon):=\frac{1}{2}\begin{cases}
        (|z|-\epsilon)^2 & |z| \ge \epsilon \\
        0 & |z| < \epsilon
\end{cases}.
\end{align*}
Therefore, $\operatorname{ST}(z;\epsilon) = \partial F(z;\epsilon)/\partial z$ where $\operatorname{ST}(\cdot)$ is the classical soft-thresholding function \cite{donoho1995noising}. Then, it is immediate to see that
\begin{align*}
    \bw_t &= \bw_{t-1} \\
    &-  \eta \Bigl(\epsilon (\bx^T \bw_{t-1} - y) \bx + \operatorname{ST}( (\bx^T \bw_{t-1} - y) \bx;\epsilon)\Bigr) 
\end{align*}
If $|\bx^T \bw_{t-1} - y| \le \frac{\epsilon}{\|\bx\|_{\infty}}$, then $\operatorname{ST}(|\bx^T \bw_{t-1} - y| \bx ; \epsilon) = \bzero$, and therefore
\begin{align*}
    \bx^T \bw_{t} - y &= \bx^T \bw_{t-1} - y - \eta \epsilon \|\bx\|_2^2 (\bx^T \bw_{t-1} - y) \\
    &= (1-\eta \epsilon \|\bx\|_2^2) (\bx^T \bw_{t-1} - y)
\end{align*}
Taking $\eta < \frac{1}{\epsilon \|\bx\|_2^2}$ ensures that $|\bx^T \bw_t - y| \le \frac{\epsilon}{\|\bx\|_{\infty}}$. Therefore, for all $t \ge T$ such that $T$ is the first iteration at which $|\bx^T \bw_{T} - y| \le \frac{\epsilon}{\|\bx\|_{\infty}}$, the algorithm essentially becomes gradient descent with a fixed step-size initialized at $\bw_T$ which depends on $\eta$. By the implicit bias property of GD as in Fact \ref{fact: gd_implicit}, this converges to
\begin{align*}
    \bw_{\infty} = \bw_T - \frac{\bx^T \bw_T - y}{\|\bx\|_2^2} \bx
\end{align*}
For a concrete example, take $d=2$, $\bx=(1,2), y = 0$ and let $\epsilon = 1$, $\bw_0 = (1,1)$, then the step-size  must satisfy $\eta < \frac{1}{5}$. Furthermore, in the first step, we have
\begin{align*}
     |\bx^T\bw_{t-1} - y| \bx = \begin{pmatrix}
        3 \\
        6
    \end{pmatrix}, \quad \text{ST}(|\bx^T \bw_{t-1} - y| \bx ; \epsilon) = \begin{pmatrix}
        2 \\
        5
    \end{pmatrix}
\end{align*}
This implies $\bw_1 = \begin{pmatrix}
    1-5 \eta \\
    1- 11 \eta
\end{pmatrix}$ and
\begin{align*}
    \bx^T \bw_1 = \bx^T \bw_0 - \eta ( \bx^T \begin{pmatrix}
        3 \\
        6
    \end{pmatrix} + \bx^T \begin{pmatrix}
        2 \\
        5
    \end{pmatrix}) = 3 - 27 \eta 
\end{align*}
Requiring this to be less than $\frac{1}{2}$ implies implies $\frac{5}{54} < \eta < \frac{7}{54}$. With such $\eta$, the algorithm converges to
\begin{align*}
    \bw_{\infty} = \frac{1}{5}\begin{pmatrix}
            2 + 2\eta \\
            -1 - \eta
    \end{pmatrix}
\end{align*}
Hence we have the following proposition.
\begin{Proposition}
    There exists a convex loss $\calL$ and a strongly convex dual space preconditioner $K$ such that the map $\eta \mapsto \bW_{\infty, \eta}$ where $ \bW_{\infty, \eta}$ denotes the convergence point of algorithm \eqref{alg: precond} is injective for $\eta \in [a, b] \subset (0, \infty)$.
\end{Proposition}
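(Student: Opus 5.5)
The plan is to verify the explicit example above rigorously: $d=2$, $\bx=(1,2)^T$, $y=0$, $\calL(\bw)=\frac12(\bx^T\bw-y)^2$, $\epsilon=1$, $\bw_0=(1,1)^T$, with preconditioner $K(\bz)=\frac12\|\bz\|_2^2+\sum_i F(z_i;1)$. Take $[a,b]=[\frac{5}{54},\frac{7}{54}]$.

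\textbf{Step 1 (the objects are admissible).} $\calL$ is a convex quadratic. $F(\cdot;1)$ is convex and $C^1$, with derivative $\operatorname{ST}(\cdot;1)$. Adding $\frac{\epsilon}{2}\|\bz\|_2^2$ therefore makes $K$ $1$-strongly convex, with $\nabla K(\bz)=\bz+\operatorname{ST}(\bz;1)$ applied coordinatewise.

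\textbf{Step 2 (first iterate).} Write $r_t:=\bx^T\bw_t-y$ for the residual, so $r_0=3$. Then $\nabla\calL(\bw_0)=r_0\bx=(3,6)^T$ and $\operatorname{ST}((3,6)^T;1)=(2,5)^T$. Hence
\[
\bw_1=(1-5\eta,\,1-11\eta)^T, \qquad r_1=3-27\eta .
\]
For $\eta\in[a,b]$ this gives $|r_1|\le \frac12=\epsilon/\|\bx\|_\infty$.

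\textbf{Step 3 (invariance of the GD regime).} Suppose $|r_t|\le\frac12$. Then every coordinate of $\nabla\calL(\bw_t)=r_t\bx$ has absolute value at most $|r_t|\,\|\bx\|_\infty\le 1=\epsilon$, so the soft-thresholding term vanishes. The update is therefore the plain gradient step $\bw_{t+1}=\bw_t-\eta r_t\bx$, which gives
\[
r_{t+1}=(1-5\eta)\,r_t .
\]
Since $\eta\le \frac{7}{54}<\frac15$, we have $0<1-5\eta<1$. So $|r_{t+1}|\le|r_t|\le\frac12$, and by induction the regime persists for all $t\ge1$ with $r_t\to0$ geometrically.

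\textbf{Step 4 (closed-form limit).} For $t\ge1$ every increment $\bw_{t+1}-\bw_t$ is a multiple of $\bx$, and these multiples are summable because $r_t$ decays geometrically. Hence $\bw_t$ converges to a point $\bw_\infty\in\bw_1+\operatorname{span}(\bx)$ with $\bx^T\bw_\infty=y$. This is the same conclusion as Fact~\ref{fact: gd_implicit} applied from initialization $\bw_1$, and it can also be checked directly by summing the geometric series. The unique such point is
\[
\bw_\infty=\bw_1-\frac{r_1}{\|\bx\|_2^2}\,\bx=\tfrac15\,(2+2\eta,\,-1-\eta)^T .
\]
Its first coordinate is strictly increasing and affine in $\eta$. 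Therefore $\eta\mapsto\bw_{\infty,\eta}$ is injective on $[a,b]$, which proves the proposition.

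The only delicate point is Step 3: the thresholding must stay inactive at every later iteration, not just at the first. This is exactly what the contraction factor $1-5\eta\in(0,1)$ guarantees. Beyond that, the remaining work is just re-checking the arithmetic for $\bw_1$ and $r_1$.
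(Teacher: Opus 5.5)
Your proposal is correct and is essentially the paper's own argument. You use the same example ($\bx=(1,2)^T$, $y=0$, $\epsilon=1$, $\bw_0=(1,1)^T$), the same first iterate $\bw_1=(1-5\eta,1-11\eta)^T$ with residual $3-27\eta$, and the same limit $\bw_\infty=\frac15(2+2\eta,-1-\eta)^T$, whose first coordinate is strictly increasing in $\eta$. Your induction via the contraction factor $1-5\eta\in(0,1)$ and the direct geometric-series summation only make rigorous what the paper asserts in one line. Taking the closed interval $[\frac{5}{54},\frac{7}{54}]$ instead of the paper's open one is fine, since $\operatorname{ST}(z;1)=0$ at $|z|=1$.
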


\cite{gunasekar2018characterizing} empirically noted that, for the natural-gradient descent, the implicit bias depends on the choice of the constant step size. In natural-gradient descent ($\bW_t = \bW_{t-1} - \nabla^2 \psi(\bW_{t-1})^{-1} \nabla \calL(\bW_{t-1})$), the left-preconditioner is a nonlinear mapping of the weights themselves. We leave the exploration of the connection between the dual-space preconditioners and natural-gradient descent for future work.


\section{Examples and Discussion}

To delineate the utility of our results, in this section, we will focus on a few selected examples. We consider the squared loss which amounts to taking $\mu = M = 1$ in Theorems \ref{thm: conv_gen} and \ref{thm: conv_stoc}. First, we demonstrate the utility of Theorem \ref{thm: conv_stoc} and then proceed to analyze a nonisotropic preconditioner using Theorem \ref{thm: conv_gen}.


   

\textbf{1) Normalized Stochastic Gradient Descent:} We consider the case where $$K(\bZ) = \|\bZ\|_F - \epsilon\log(\epsilon+ \|\bZ\|_F) + \epsilon \log(\epsilon)$$
to which we can apply Theorem \ref{thm: conv_stoc} for $h(x) = x - \epsilon \log (x + \epsilon)$ yielding $\frac{h'(x)}{x} = \frac{1}{x+\epsilon}$. By the first part of Theorem \ref{thm: conv_stoc}, there exists some $r':= \tilde{M}_B \|\bW_0 - \bW_\ast\|_F> 0$ such that $\|\nabla \calL_t^{(B)}(\bW_{t-1})\|_F < r'$, hence, $\frac{h'(\|\bZ\|_F)}{\|\bZ\|_F} > \frac{1}{r'+\epsilon}$ for $\bZ \in \calB_{r'}$. Taking $m_K := \frac{1}{r'+\epsilon}$, we observe that the iterations of stochastic preconditioned GD with batch size $B$ converge linearly for $\eta < \frac{B^2}{n} \frac{\epsilon^2}{\frac{\sigma_1(\bX \bX^T)}{B}\|\bW_0 -  \bW_{\text{GD},\infty}\|_F + \epsilon}  \frac{\sigma_n(\bX \bX^T)}{\sigma^2_1(\bX \bX^T)}$ the iterations 
\begin{align*}
    \bW_{t} = \bW_{t-1} - \eta \frac{\nabla \calL^{(B)}_t(\bW_{t-1})}{\epsilon + \|\nabla \calL^{(B)}_t(\bW_{t-1})\|_F}
\end{align*}

converge to the optimal solution of \eqref{opt: GD}. Here we have taken $L_K = \frac{1}{\epsilon}$.

\textbf{2) Stochastic Gradient Clipping:} Gradient Clipping is another popular scheme \cite{zhang2019gradient} which can be phrased as 
\begin{align*}
    K(\bZ) = \begin{cases}
        \frac{\|\bZ\|_F^2}{2} & \|\bZ\|_F\le \epsilon \\
        \epsilon \|\bZ\|_F - \frac{\epsilon^2}{2} & \|\bZ\|_F > \epsilon
    \end{cases}
\end{align*}
Which yields $\nabla K( \bZ) = \min \{\frac{\epsilon}{\|\bZ\|_F}, 1\} \bZ$. 
Here $\frac{h'(\|\bZ\|_F)}{\|\bZ\|_F} = \min \{1, \frac{\epsilon}{\|\bZ\|_F}\}$. With the same radius $r'$ as before, there is a non-zero $m_k:= \min \{\frac{B\epsilon}{\|\bW_0 -  \bW_{\text{GD},\infty}\|_F\sigma_1(\bX \bX^T)}, 1\} \le \frac{h'(\|\bZ\|_F)}{\|\bZ\|_F}$ for $\bZ \in \calB_{r'}$. Note that $K(\cdot)$ is differentiable and its gradient is $1$-Lipschitz, so $L_K =1$, hence for $$\eta \le \frac{B^2}{n} \frac{\sigma_n(\bX \bX^T)}{\sigma^2_1(\bX \bX^T)} \min\Bigl\{ \frac{B\epsilon}{\|\bW_0 -  \bW_{\text{GD},\infty}\|_F \sigma_1(\bX \bX^T)},1\Bigr\},$$
the iterations
\begin{align*}
    \bW_{t} = \bW_{t-1} - \eta \min\Bigl\{\frac{\epsilon}{\|\nabla \calL^{(B)}_t(\bW_{t-1})\|_F}, 1\Bigr\} \nabla \calL^{(B)}_t(\bW_{t-1})
\end{align*}
converge to the optimal solution of \eqref{opt: GD}.

\textbf{3) Adam:} To illustrate the utility of the second part of Theorem \ref{thm: conv_gen}, let us consider $K(\bZ) = \sum_{j,j'=1}^{d,k} |\bZ^{(j, j')}| - \epsilon \log(1+ \frac{|\bZ^{(j, j')}}{\epsilon}|)$ which yields $\nabla K(\bZ) = \frac{1}{\epsilon + |\bZ|} \bZ$, where $|\cdot|$ is the absolute value function acting entrywise and $|\bZ|>0$. The update rule in \eqref{alg: precond} turns into
\begin{align}
    \bW_{t} = \bW_{t-1} - \eta \frac{\nabla \calL(\bW_{t-1})}{\epsilon + |\nabla \calL(\bW_{t-1})|}\label{alg: precond_adam}
\end{align}
Note that the choice of $\nabla K(\cdot)$ follows the update rule of Adam \cite{kingma2014adam} without weight decay and momentum.
Generally, the weights are initialized such that for every $(j,j') \in [d] \times [k]$, we have that $|\nabla \calL(\bW_0)^{(j,j')}| > 0$. Therefore, we may assume that throughout the iterations, no entry of the gradient of the loss is exactly zero. Setting $m_K = \frac{\epsilon}{(\epsilon + \sup_{t\ge 1} \|\nabla \calL(\bW_t)\|_{\infty}}$ and $\alpha < \min\{\frac{1}{\eta}, \frac{1}{n m_K} \}$,  $\eta < n \epsilon$, we can utilize Theorem \ref{thm: conv_gen} to conclude convergence and to bound the distance between $\bW_{\infty}$ and $\bW_{\text{GD}, \infty}$. 

We would also like to provide intuition for the dynamics of \eqref{alg: precond_adam}. We observe that, in the beginning, if the entries of $|\nabla \calL(\bW_{t-1})| \gg \epsilon$, then $\frac{\nabla \calL(\bW_{t-1})}{\epsilon + |\nabla \calL(\bW_{t-1})|} \approx \sign(\nabla \calL(\bW_{t-1}))$. Toward the end, where $|\nabla \calL(\bW_{t-1})| \ll \epsilon$, we have $\frac{\nabla \calL(\bW_{t-1})}{\epsilon + |\nabla \calL(\bW_{t-1})|} \approx \frac{1}{\epsilon} \nabla \calL(\bW_{t-1})$. To summarize, the updates are very close to SignGD \cite{bernstein2018signsgd} in the beginning phase of the optimization and are very close to GD in the end phase.  


It is worth mentioning the related work \cite{xie2024implicit}, which considers AdamW. The main difference between their setting and ours is that the setting of \cite{xie2024implicit} includes weight decay. The presence of weight decay often drives the solutions away from interpolation and leads to a substantially different analysis. 
    

\textbf{4) Smoothed Matrix Elastic-Net} \label{ex4}: Let 
\begin{align*}
    K(\bZ) = \tr((\bZ^T \bZ + \epsilon_1 \bI)^{1/2}) + \frac{\epsilon_2}{2} \|\bZ\|_F^2 - k \sqrt{\epsilon_1}
\end{align*}
Intuitively, the first term serves as an approximation of the nuclear norm of the matrix $\bZ$. Its gradient is
\begin{align*}
    \nabla K(\bZ) = \bZ (\bZ^T \bZ + \epsilon_1 \bI)^{-1/2} + \epsilon_2 \bZ
\end{align*}
By the Daletskii-Krein theorem (Chapter V in \cite{bhatia2013matrix}) applied to the Hermitian dilation of $\nabla K(\bZ)$, we observe that the first term is $\frac{1}{\sqrt{\epsilon_1}}$-Lipschitz and therefore $L_K = \frac{1}{\sqrt{\epsilon_1}}+\epsilon_2$. Clearly $K(\bZ)$ is $\epsilon_2$-strongly convex, hence the Assumption $K-\alpha \calL^\ast$ can be satisfied globally. Therefore, Theorem \ref{thm: conv_gen} implies that the iterates of
\begin{align*}
    &\bW_t = \bW_{t-1} \\
    &- \eta \nabla \calL(\bW_{t-1}) \Bigl((\nabla \calL(\bW_{t-1}) ^T \nabla \calL(\bW_{t-1})  + \epsilon_1 \bI)^{-1/2} + \epsilon_2 \bI\Bigr)
\end{align*}
converge to $\bW_{\text{GD},\infty}$ as defined in \eqref{opt: GD}.

\subsection{Discussion}

In Fig. \ref{fig:compare_lin}, we demonstrate the linear convergence of the squared loss against the iterations. We consider a linear regression task with a Gaussian matrix with a power-law covariance $\bSigma$ where $\bSigma_{ii} = i^{-\alpha}$. We observe that among the algorithms of the previous section, all have similar convergence rates to that of standard SGD, despite using either clipped or normalized updates. Whether adding momentum to the argument of $\nabla K(\cdot)$ or to the weights can improve the theoretical rate is an interesting direction for future work.

Motivated by the setting of \cite{oymak2019overparameterized}, we consider the training of a two-layer MLP with squared loss on a linear regression task. We observe that our proposed approximation of matrix elastic-net yields a faster convergence rate than SGD and entrywise normalization.

\begin{figure}
    \centering
    \includegraphics[width=0.9\linewidth]{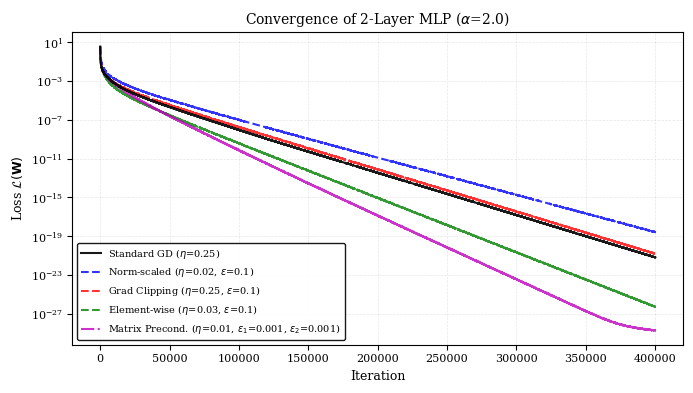}
    \caption{Performance of various preconditioners on a single-hidden-layer MLP. The parameters here are tuned to demonstrate the speedup over the best performance of vanilla gradient descent.}
    \label{fig:compare_mlp}
\end{figure}

\section{Outline of the proofs}







\subsection{Outline of the Proof of Theorem 1.II}

To prove Theorem 1, first, we use Proposition \ref{prop: fund_id} to prove that $\|\nabla \calL(\bW_t)\|_2$ remains bounded. Then, we proceed to show the sequence $\{ \bW_t \}_{t=1}$ is Cauchy with the aid of Lemma \ref{lem: nablaL_bnd} which immediately yields a convergence rate. Finally, using the Assumptions \ref{ass: main_gen} and \ref{ass: main_erm}, we prove that $\bW_{\infty} \in \calM$.

\subsection{Outline of the Proof of Theorem 1.III}

In the case that $K(\cdot)$ is not isotropic, for instance, in the third example of Section IV, we cannot apply the reasoning as in Theorem \ref{thm: conv_stoc}. To this end, we introduce the following auxiliary variable $\hbW$. Consider $\{\bW_t\}_{t=1}$ generated according to the update rule in \eqref{alg: precond}, initialized from $\bW_0$.
Furthermore, consider
\begin{align*}
    \hbW_t =  \bW_{t-1} - \eta \nabla\calL(\bW_{t-1})
\end{align*}
Consider the following GD updates, initialized from the same $\bW_0$:
\begin{align*}
    \bW_{\text{GD},t} =  \bW_{\text{GD},t-1} - \eta \nabla\calL(\bW_{\text{GD},t-1})
\end{align*}

We use $\hbW_t$ to analyze $\|\bW_{\text{GD},t}- \bW_t\|_F$. By algebraic manipulations, we observe that
\begin{align*}
    \|&\bW_{\text{GD},i}  - \hbW_{t}\|_{F}  \le \sum_{j=1}^{t-1}  \|\bW_{j} - \hbW_{j}\|_{F} \\ 
    &+  \eta \sum_{j=1}^{t} \Bigl(\|\nabla\calL( \bW_{\text{GD},j-1}) \|_{F} + \| \nabla\calL(\bW_{j-1})  \|_{F} \Bigr)
\end{align*}
Combining the triangle inequality and $L_K$-Lipschitz continuity of $\nabla K$, we obtain
\begin{align*}
    \|\hbW_t - \bW_t\|_{F} &= \eta \Bigl\| \nabla K\left(\nabla\calL(\bW_{t-1})\right) - \nabla\calL(\bW_{t-1}) \Bigr\|_{F}\\
    &\le \eta (L_K +1) \| \nabla\calL(\bW_{t-1})\|_{F}
\end{align*}
Combining this identity with the bounds in Lemma \ref{lem: nablaL_bnd}, yields the results of Theorem \ref{thm: conv_gen}.

\subsection{Implicit bias of span-preserving preconditioner}

In this section, we provide an explanation of why the convergence point of the gradient descent as in Fact \ref{fact: gd_implicit} is a natural candidate for the final point of span-preserving preconditioners such that $\nabla K(\bZ) = \bZ \tilde{K}(\bZ)$. Note that for loss functions satisfying Assumptions \ref{ass: main_erm}, we may write $\nabla \calL_t^{(B)} (\bW_{t-1}) = \bX^{(B)T}_t \bL_{t-1}$ where
\begin{align*}
    \bL_{t-1} = \Bigl[\frac{1}{n}\sum_{j=1}^{n} \ell'_{t_j,j'} \left((\bx_{t_j}^{(t-1)})^T \bW_{t-1}^{(j')}-y_{t_j j'}\right) \Bigr]_{j'=1}^k
\end{align*}
This implies that
\begin{align*}
    \nabla K(\nabla \calL_t^{(B)} (\bW_{t-1})) = \bX^T \bLambda_{t-1}
\end{align*}
where $\bLambda_{t-1} := \bL_{t-1} \tilde{K}(\nabla \calL_t^{(B)} (\bW_{t-1}))$. Consequently, $\bW_t - \bW_0 \in \calS$ and it follows that, if the algorithm converges to some $\bW_{\infty} \in \calM$, one must have simultaneously $\bW_{\infty} - \bW_0 \in \calS$ while $\bX \bW_{\infty} = \bY$. $\bW_{\text{GD}, \infty}$ is the only point satisfying the preceding identities.

\section{Experiments}

\subsection{A Simple Case Study of Example 3}
Similar to the analytical worked-out example, we consider the loss
\begin{align*}
    \min_{\bw \in \bbR^2} (\bx^T \bw - \by)^2
\end{align*}
Take $\bx=(1,2)$ and let $\epsilon = 1$, $\bw_0 = (1,1)$, but this time, we use the following preconditioner
\begin{align*}
    \bW_{t} = \bW_{t-1} - \eta \frac{\nabla \calL(\bW_{t-1})}{\epsilon + |\nabla \calL(\bW_{t-1})|}
\end{align*}
We have plotted the trajectories of the algorithm for various constant step sizes in Fig. \ref{fig:placeholder} near their limiting points. We observe that despite the loss attaining $0$, the convergence point varies with the choice of the step size $\eta$. This serves as another example of the implicit bias of the dual-space preconditioners depending on the value of the constant step-size.

\begin{figure}[t]
    \centering
    \includegraphics[width=1\linewidth]{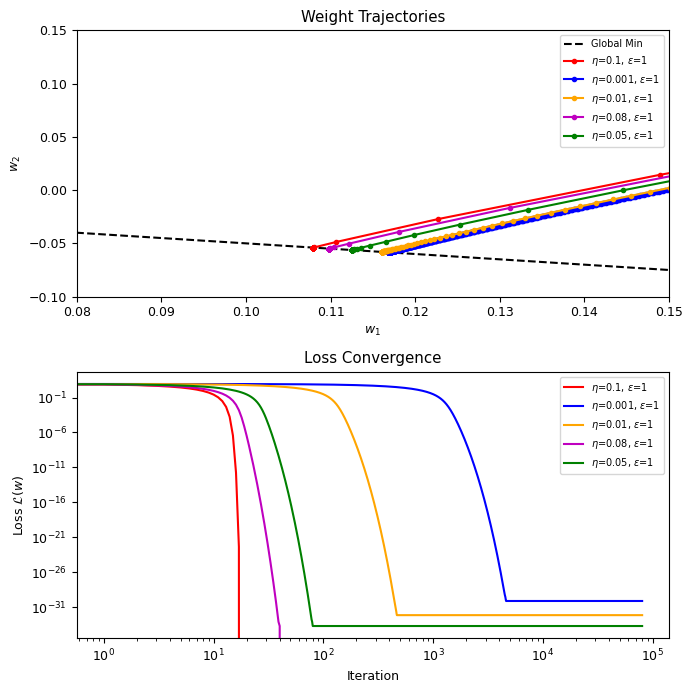}
    \caption{Trajectories and loss curves for the iterates of \eqref{alg: precond_adam} for several choices of constant step sizes with $\epsilon=1$. We observe that varying the step size yields different final interpolating points on the interpolating manifold $w_1 + 2 w_2 = 0$.}
    \label{fig:placeholder}
\end{figure}
\subsection{Higher dimensional example}

\begin{figure}[t]
  \centering
  \includegraphics[width=1.0\linewidth]{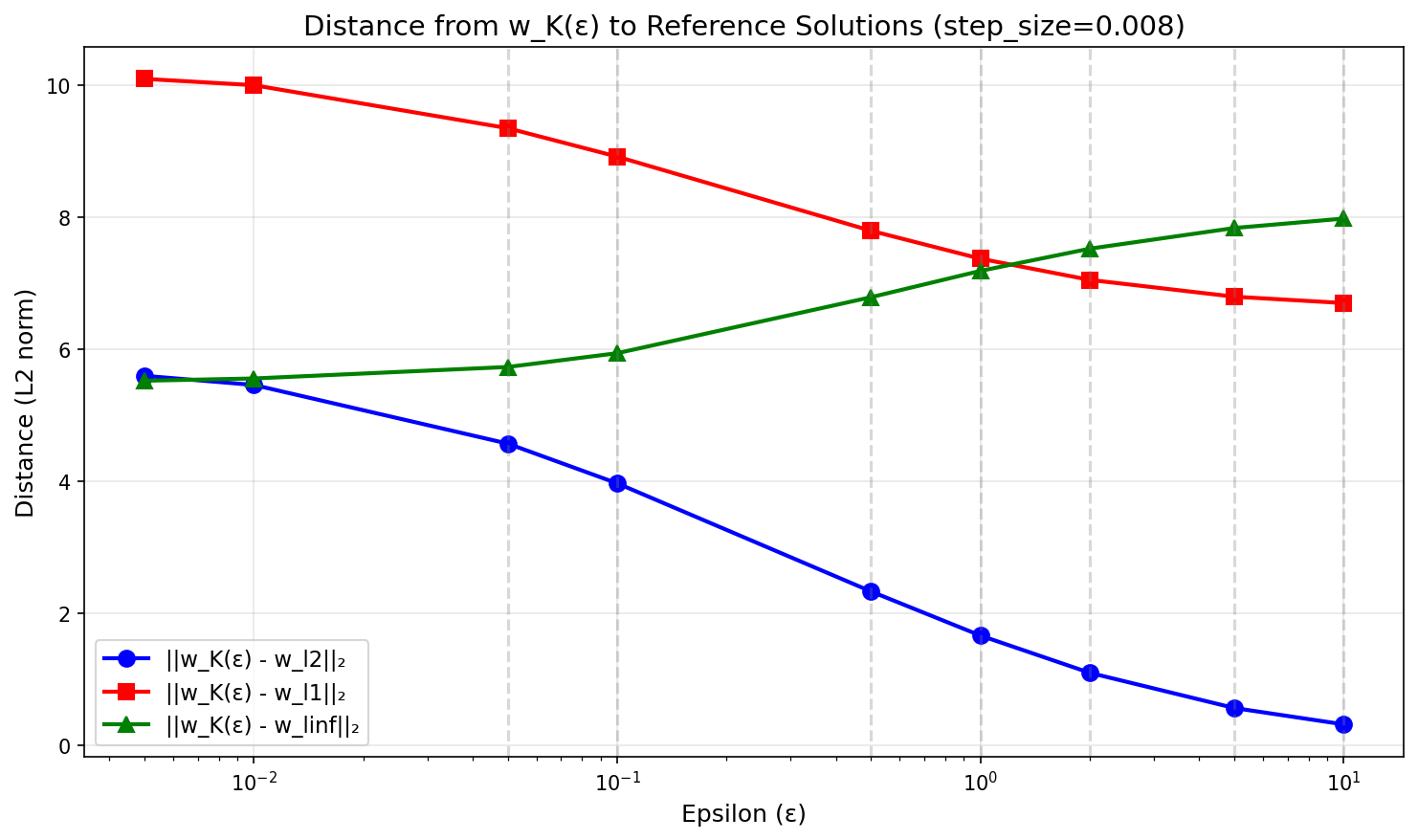}
  \caption{Distance from $p = 1, 2, \infty$ solutions in \eqref{exp: lp} for the solution obtained using \eqref{exp: precond}.}
\label{fig:LR}
\end{figure}
In this section, we mainly consider the following preconditioner without momentum: \begin{equation}\label{exp: precond}
    K(\bZ) = \sum_{j,j'=1}^{d,k} |\bZ^{(j, j')}| - \epsilon\log(\epsilon + |\bZ^{(j, j')}|)
\end{equation}
and for simplicity, we take $k=1$ and consider the squared loss. We consider $\bX \in \bbR^{100 \times 200}$ with i.i.d. standard normal entries. Let $\bw_{\ast} \sim \calN(\bzero, \bI)$ and $\by = \bX \bw_{\ast} + \bz$ where $\bz \sim \calN(\bzero, 0.01\bI)$. For the initialization, we choose $\bw_0 \sim \calN(\bzero, \frac{1}{200} \bI)$ so that $\|\bw_0 \|_2 \approx 1$, with high probability. We fix $\epsilon = 0.5$, vary $\eta$ and compute the corresponding $\|\bW_{\infty} - \bW_{\text{ref}}\|_2$, where $\bW_{\text{ref}}$ is the convergence point for $\eta = 0.005$. In Fig. \ref{fig:epsilon}, we observe that despite attaining the same distance from $\bW_{\text{GD},\infty}$, $\bW_{\infty}$ depends on $\eta$. This is in contrast with the case of SMD, where the existing works \cite{soudry2018implicit, azizan2018stochastic, akhtiamov2026implicitbiasconvergencematrix} show that the implicit bias is independent of the choice of the step size, provided that it is sufficiently small. 

\subsection{Comparison with $\ell_p$-Norm Solutions}

In Fig. \ref{fig:LR}, for a fixed step size $\eta$, we vary $\epsilon$ and plot the distance of convergence point of \eqref{alg: precond} from the optimal point of 
\begin{align}\label{exp: lp}
    \min_{\bW \in \bbR^d} \|\bW - \bW_{0}\|_p \quad \text{ s.t. }
    \bX \bW = \bY
\end{align}
for the $p =1, 2, \infty$-norms in the objective as their solutions correspond to three different points on the interpolating manifold. As a result of increasing $\epsilon$, as discussed in Section IV, the iterations of \eqref{exp: precond} will be qualitatively close to those of GD. Interestingly, for small $\epsilon$, we observe that the point of convergence has the same distance from the $p=2$ and $p=\infty$ solutions. Nonetheless, as the solution of \eqref{exp: lp} for $p=2$ is closer to $\bW_{\infty}$ than the other two, Fig. \ref{fig:LR} motivates the study of the $\ell_2$ distance of $\bW_{\infty}$ from $\bW_{\text{GD},\infty}$.

\begin{figure}[t]
  \centering
\includegraphics[width=1.0\linewidth]{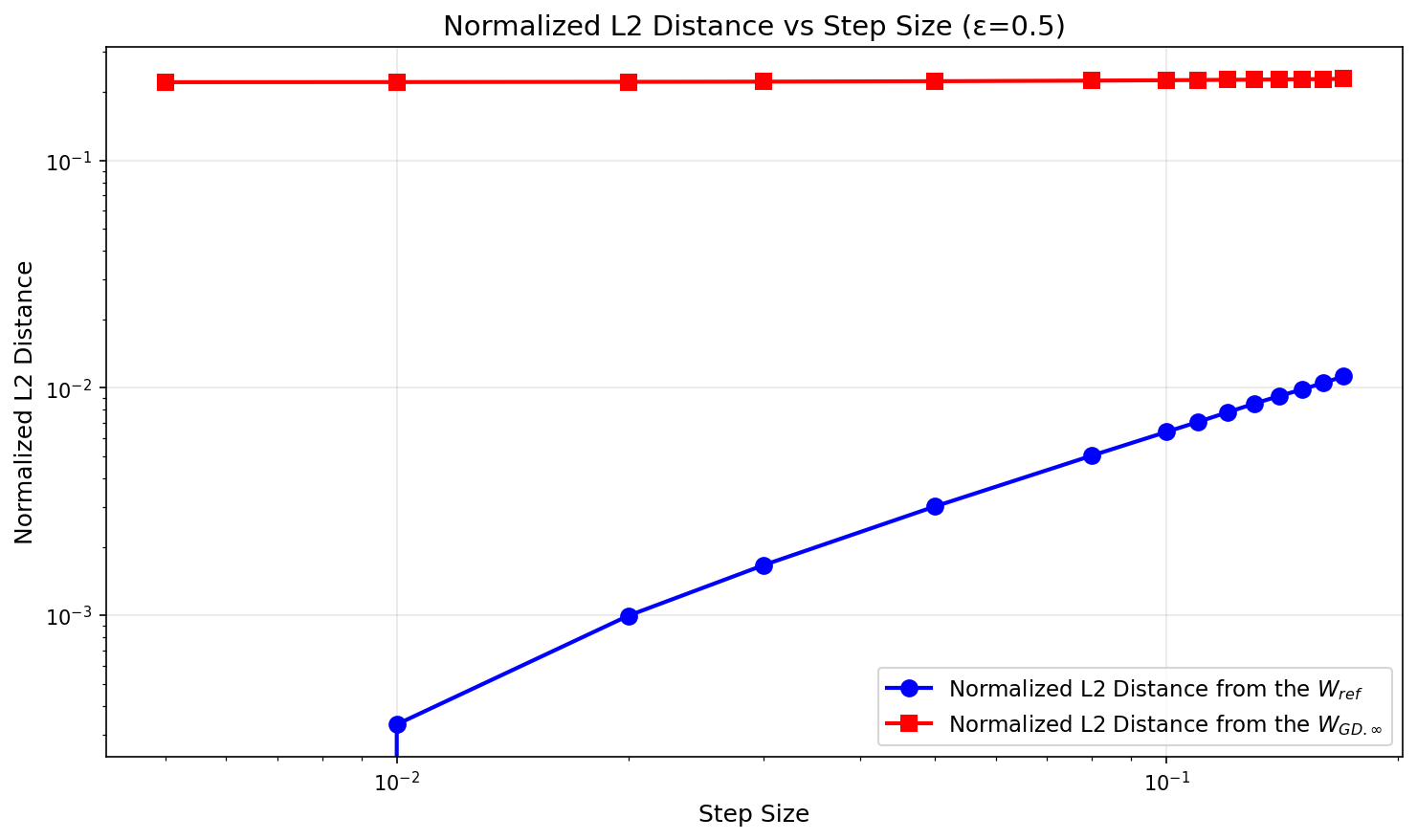}
  \caption{The distance of solutions obtained using \eqref{exp: precond} from $\bW_{\text{GD},\infty}$ and $\bW_{\text{ref}}$ normalized by the norm of $\bW_{\text{ref}}$.} \label{fig:epsilon}
\end{figure}


\section{Conclusion}
In this work, we considered nonlinear preconditioners in the overparameterized regime. Future directions include extending the current analysis to nonsmooth preconditioners which would encompass algorithms such as Muon \cite{jordan2024muon}, providing the exact characterization of the point of convergence for nonisotropic preconditioners and adapting our proof of convergence to the {\it stochastic} nonisotropic preconditioned gradient descent. Studying the convergence and the implicit bias of preconditioners on nonlinear models similar to the setting of \cite{oymak2019overparameterized} is another prominent direction.

\appendix

\section{Details of the Experiments}

\subsection{Proof of Theorem 1.I}
\begin{proof}
    
First, we show that there exists some $r>0$ such that $\sup_{t\ge 1} \|\nabla \calL(\bW_t)\|_F \le r$. As noted earlier, the proof of convergence is based on the result of Proposition \ref{prop: fund_id}, whose proof is presented in the Auxiliary Results section. Using Proposition \ref{prop: fund_id}, we obtain:

\begin{align}\label{eq: sum_fund}
    \tilde{D}_{\calL}&(\bW,\bW_{0}) = \tilde{D}_{\calL}(\bW,\bW_t) + \eta \sum_{t=1}^T K(\nabla \calL(\bW_{t})) \nonumber \\
     &+\sum_{t=1}^T \Bigl[  \tilde{D}_{\calL}(\bW_t,\bW_{t-1}) - \eta D_{K}(\nabla \calL(\bW_t),\nabla \calL(\bW_{t-1})) \Bigr] \nonumber \\
     &+ \eta \sum_{t=1}^T  D_{K}(\nabla \calL(\bW),\nabla \calL(\bW_{t-1}))
\end{align}
Here $\bW$ is chosen so that $\nabla \calL(\bW) = 0$. By taking $T \rightarrow \infty$, we observe that the left hand side of \eqref{eq: sum_fund} is independent of $t$ and finite, and every term on the right hand side is positive. This follows because of the assumption $\calL^\ast - \eta K$ being convex which implies $ \tilde{D}_{\calL}(\bW_t,\bW_{t-1}) - \eta D_{K}(\nabla \calL(\bW_t),\nabla \calL(\bW_{t-1}))$ is nonnegative by Lemma \ref{lem: breg_k_ell}. Therefore, we have that $\sum_{t=1}^{\infty} K(\nabla \calL(\bW_{t}))< \infty$ which implies $K(\nabla \calL(\bW_{t})) \rightarrow 0$. As $K(\cdot)$ is coercive, this implies that $\sup_{t\ge 1} \|\nabla \calL(\bW_{t-1}) \|_F< \infty$ and subsequently the sequence is bounded. Thus, for every $t\ge 1$,  $\nabla \calL(\bW_t) \in \calB_r$ by setting $r= \sup_{t\ge 1} \|\nabla \calL(\bW_{t-1}) \|_F$. 
\end{proof}

\subsection{Proof of Theorem 1.II}
\begin{proof}
    
By our assumption, we can find some $\alpha > 0$ such that the assumption of Lemma \ref{lem: nablaL_bnd} is satisfied. 

We prove that the sequence $\{ \bW_t\}_{t=1}$ is Cauchy and the convergence follows by the completeness of $\bbR^{d \times k}$ with the $\|\cdot\|_F$ metric. For $s, t \in \bbN$ and $t \ge s \ge 1$, we have that
\begin{align*}
    \|\bW_s - \bW_t \|_F &= \eta \| \sum_{i=s}^{t-1} \nabla K ( \nabla \calL(\bW_i))\|_F \\
    &\le \eta  \sum_{i=s}^{t-1}  \|\nabla K ( \nabla \calL(\bW_i))\|_F
\end{align*}
Then by Assumption \ref{ass: main_erm} on the $L_K$-Lipschitzness of $\nabla K(\cdot)$ and the fact that $\nabla K(0) = 0$, we have that
\begin{align*}
      \|\bW_s - \bW_t \|_F \le \eta  L_K \sum_{i=s}^{t-1}  \| \nabla \calL(\bW_i)\|_F
\end{align*}

Then by Lemma \ref{lem: nablaL_bnd}, we obtain
\begin{align*}
    \|\bW_s - \bW_t \|_F \le\eta  L_K \sqrt{2\tilde{M} \calL(\bW_0)}   \sum_{i=s}^{t-1} (\sqrt{1 - \alpha \eta})^{i} \\
    = \eta  \frac{L_K(1 - (\sqrt{1 - \alpha \eta})^{t-s})}{1- \sqrt{1 - \alpha \eta}} \sqrt{2\tilde{M} \calL(\bW_0)} (\sqrt{1 - \alpha \eta})^{s}
\end{align*}
This proves that the sequence  $\{ \bW_t\}_{t=1}$ is Cauchy; Therefore, the iterates of the algorithm \eqref{alg: precond} converge to some $\bW_{\infty} \in \bbR^{d \times k}$. Thus, we obtain the following convergence rate by letting $t \rightarrow \infty$:
\begin{align*}
     \|\bW_{\infty} - \bW_s \|_F \le \eta L_K \frac{ \sqrt{2\tilde{M} \calL(\bW_0)}}{1- \sqrt{1 - \alpha \eta}} (\sqrt{1 - \alpha \eta})^{s}
\end{align*}

Next, we show that $\bW_{\infty} \in \calM$. From the first part, since $K(\nabla \calL(\bW_t)) \rightarrow 0$, which implies $\nabla \calL(\bW_t) \rightarrow 0$ by Assumption \ref{ass: main_gen}. Subsequently, $\nabla \calL(\bW_{\infty}) = 0$. From Assumption \ref{ass: main_erm}, we observe $\bX \bW_{\infty} = \bY$; therefore, $\bW_{\infty} \in \calM$.

If $\nabla K(\nabla \calL(\bW_t)) \in \calS$, take $\bL_t \in \bbR^{n \times k}$ such that $\nabla K(\nabla \calL(\bW_t)) = \bX^T \bL_{t}$. Now we may write:
\begin{align*}
    \bW_t - \bW_0 = \eta \bX^{T} \sum_{i=0}^{t-1} \bL_{i}
\end{align*}
Hence 
\begin{align*}
     \bW_{\infty} - \bW_0 = \eta \bX^{T} \sum_{i=0}^{\infty} \bL_{i}
\end{align*}
Now we observe that since $\bX \bW_{\infty} = \bY$ and $\bW_{\infty} - \bW_0  = \bX^T \bLambda$ for $\bLambda :=  \eta \sum_{i=0}^{t-1} \bL_{i}$, $\bW_{\infty}$ satisfies the KKT conditions of \eqref{opt: GD}. As \eqref{opt: GD} has a unique minimizer, $\bW_{\infty}$ is the minimizer of \eqref{opt: GD} and $\bW_{\infty} = \bW_{\text{GD}, \infty}$.
\end{proof}
\subsection{Proof of Theorem 1.III}
\begin{proof}
Consider $\{\bW_t\}_{t=1}$ generated according to the following update rule, initialized from $\bW_0$
\begin{align*}
    \bW_t =  \bW_{t-1} - \eta \nabla K\left(\nabla\calL(\bW_{t-1})\right)
\end{align*}
Furthermore, consider
\begin{align*}
    \hbW_t =  \bW_{t-1} - \eta \nabla\calL(\bW_{t-1})
\end{align*}
And consider the following GD updates, initialized from the same $\bW_0$:
\begin{align*}
    \bW_{\text{GD},t} =  \bW_{\text{GD},t-1} - \eta \nabla\calL(\bW_{\text{GD},t-1})
\end{align*}

Inductively, we have:
\begin{align*}
    \bW_{\text{GD},1} - &\hbW_{1} = 0 \\
    \bW_{\text{GD},2} - &\hbW_{2} \\
    &=  \hbW_{1} - \bW_1 - \eta \Bigl(\nabla\calL( \hbW_{1}) - \nabla\calL(\bW_1)  \Bigr) \\ 
\end{align*}
Then
\begin{align*}
    \|\bW_{\text{GD},2} - \hbW_{2}\|_{F} \le&  \| \hbW_{1} - \bW_1 \|_{F} \\
    &+ \eta \Bigl\|\nabla\calL( \bW_{\text{GD},1}) - \nabla\calL(\bW_1)  \Bigr\|_{F} \\ 
    \le & \| \hbW_{1} - \bW_1 \|_{F} \\
    &+ \eta \|\nabla\calL( \bW_{\text{GD},1})\|_{F} + \eta \| \nabla\calL(\bW_1) \|_{F}
\end{align*}
Similarly, we observe that since
\begin{align*}
    \bW_{\text{GD},3} - &\hbW_{3} \\
    &=  \bW_{\text{GD},2} - \bW_{2} - \eta \Bigl(\nabla\calL( \bW_{\text{GD},2}) - \nabla\calL(\bW_2)  \Bigr)
\end{align*}
Then
\begin{align*}
    \|\bW_{\text{GD},3} - \hbW_{3}\|_{F} \le& \|\bW_{\text{GD},2} - \bW_{2}\|_{F} \\
    &+ \eta \Bigl\|\nabla\calL( \bW_{\text{GD},2}) - \nabla\calL(\bW_2)  \Bigr\|_{F} \\ 
    \le& \|\bW_{\text{GD},2} - \hbW_{2}\|_{F} + \|\bW_2 - \hbW_{2}\|_{F} \\
    &+ \eta \|\nabla\calL( \bW_{\text{GD},2}) \|_F + \eta \| \nabla\calL(\bW_2)  \|_{F}
\end{align*}
Therefore, for any $t$, we have
\begin{align*}
    \|&\bW_{\text{GD},t}  - \hbW_{t}\|_{F} \\
    &\le \|\bW_{\text{GD},t-1} - \hbW_{t-1}\|_{F} + \|\bW_{t-1} - \hbW_{t-1}\|_{F} \\
    &+ \eta \|\nabla\calL( \bW_{\text{GD},t-1}) \|_{F}
    + \eta \| \nabla\calL(\bW_{t-1})  \|_{F} \\ \cdots &  \le \sum_{j=1}^{t-1}  \|\bW_{j} - \hbW_{j}\|_{F} \\ 
    &+ \eta \sum_{j=1}^{t} \Bigl(\|\nabla\calL( \bW_{\text{GD},j-1}) \|_{F} + \| \nabla\calL(\bW_{j-1})  \|_{F} \Bigr)
\end{align*}
Then we note that by $L_K$-Lipschitz continuity of $\nabla K$
\begin{align*}
    \|\hbW_t - \bW_t\|_{F} &= \eta \Bigl\| \nabla K\left(\nabla\calL(\bW_{t-1})\right) - \nabla\calL(\bW_{t-1}) \Bigr\|_{F}\\
    &\le \eta (L_K +1) \| \nabla\calL(\bW_{t-1})\|_{F}
\end{align*}

This implies
\begin{align*}
    \|&\bW_{\text{GD},t}  - \hbW_{t}\|_{F} \\
    & \le \eta \sum_{j=1}^{t} \Bigl(\|\nabla\calL( \bW_{\text{GD},j-1}) \|_{F} + (L_K + 2) \| \nabla\calL(\bW_{j-1})  \|_{F} \Bigr)
\end{align*}

Using Lemma \ref{lem: nablaL_bnd} and taking $t \rightarrow \infty$ yields
\begin{align*}
   & \|\bW_{\text{GD},\infty} - \bW_{\infty}\|_{F} \le  \tilde{M} \eta  \|\bW_0 -\bW_{\text{GD},\infty} \|_{F} \\
    &\cdot \Bigl( \frac{\sqrt{2} }{1-\sqrt{1-\eta \tilde{\mu}}} + \frac{L_K + 2}{1-\sqrt{1-\alpha \eta}}\Bigr)
\end{align*}
This also provides a bound on the convergence point:
\begin{align*}
     &\|\bW_0 - \bW_{\infty}\|_{F} \le    \|\bW_0 -\bW_{\text{GD},\infty} \|_{F} \\
    &\cdot \Bigl(1+ \frac{\sqrt{2} \tilde{M} \eta }{1-\sqrt{1-\eta\tilde{\mu}}} + \frac{\tilde{M} \eta (L_K + 2) }{1-\sqrt{1-\alpha \eta}}\Bigr)
\end{align*}
\end{proof}

\subsection{Proof of Theorem 2.I}

\begin{proof}
    We show that for a small enough step-size $\eta > 0$, 
    \begin{align*}
        \|\bW_{t} - \bW_{\ast} \|_F^2 \le \|\bW_{t-1} - \bW_{\ast} \|_F^2 
    \end{align*}
    Using \eqref{alg: stoch}:
    \begin{align*}
        \|\bW_{t} - \bW_{\ast} \|_F^2 &= \|\bW_{t-1} - \bW_{\ast} \|_F^2  \\
        &+ \eta^2 \|\nabla K (\nabla \calL_t^{(B)}(\bW_{t-1}))\|_F^2 \\ 
        &- 2 \eta \tr \Bigl((\bW_{t-1} - \bW_{\ast})^T \nabla K ( \nabla\calL^{(B)}_{i}(\bW_{t-1})) \Bigr)
    \end{align*}
    Now we observe that for the isotropic preconditioner as $\nabla \calL^{(B)}_t(\bW_{\ast}) = \bzero$:
    \begin{align*}
         \tr \Bigl((\bW_{t-1} - \bW_{\ast})^T \nabla K ( \nabla\calL_{t}^{(B)}(\bW_{t-1})) \Bigr) \\
         = \frac{h'(\|\nabla \calL_t^{(B)}(\bW_{t-1})\|_F)}{\| \nabla \calL_t^{(B)}(\bW_{t-1})\|_F} \tr \Bigl((\bW_{t-1} - \bW_{\ast})^T  \nabla\calL_{t}^{(B)}(\bW_{t-1}) \Bigr) \\
         \ge \frac{h'(\|\nabla \calL_t^{(B)}(\bW_{t-1})\|_F)}{\|\nabla \calL_t^{(B)}(\bW_{t-1})\|_F} \frac{\|\nabla \calL_t^{(B)}(\bW_{t-1})\|_F^2}{2 \tilde{M}_B}
    \end{align*} 
    As the sampling is done without replacement, the last inequality follows from the $\tilde{M}_B := \frac{M \sigma_1(\bX \bX^T)}{B}$-smoothness of the loss and Lemma \ref{lem: gen_dec_lem}. Thus 
    \begin{align*}
        \|\bW_{t} - \bW_{\ast} \|_F^2 \le \|\bW_{t-1} - \bW_{\ast} \|_F^2 \\
        + \eta  h'(\|\nabla \calL_t^{(B)}(\bW_{t-1})\|_F) \|\nabla \calL_t^{(B)}(\bW_{t-1})\|_F \\
        \cdot \Bigl( \eta  \frac{h'(\|\nabla \calL_t^{(B)}(\bW_{t-1})\|_F)}{\|\nabla \calL_t^{(B)}(\bW_{t-1})\|_F} - \frac{1}{\tilde{M}_B}  \Bigr) 
    \end{align*}
    Since $\frac{h'(\|\nabla \calL_t^{(B)}(\bW_{t-1})\|_F)}{\|\nabla \calL_t^{(B)}(\bW_{t-1})\|_F} \le L_K$, letting $\eta \le \frac{1}{L_K \tilde{M}_B}$ yields
    \begin{align*}
        \|\bW_{t} - \bW_{\ast} \|_F^2 \le \|\bW_{0} - \bW_{\ast}\|^2_F
    \end{align*}
\end{proof}

\subsection{Proof of Theorem 2.II}
\begin{proof}
    
    Consider $K(\cdot) = h(\|\cdot\|_F)$ and the point $\bW_{\text{GD},\infty}$ defined in \eqref{opt: GD}. We have
    \begin{align*}
        \bW_{t} - \bW_{\text{GD},\infty} &= \bW_{t-1} - \bW_{\text{GD},\infty} \\
        &- \eta \frac{h'(\|\nabla \calL_t^{(B)}(\bW_{t-1})\|_F)}{\|\nabla \calL_t^{(B)}(\bW_{t-1})\|_F} \nabla \calL_t^{(B)}(\bW_{t-1})
    \end{align*}
    Hence
    \begin{align*}
        &\|\bW_{t} - \bW_{\text{GD},\infty}\|^2_F = \|\bW_{t-1} - \bW_{\text{GD},\infty}\|_F^2 \\
        &+ \eta^2 (h'(\|\nabla \calL_t^{(B)}(\bW_{t-1})\|^2_F))^2 \\
        &- 2 \eta \frac{h'(\|\nabla \calL_t^{(B)}(\bW_{t-1})\|_F)}{\|\nabla \calL_t^{(B)}(\bW_{t-1})\|_F} \\
        &\cdot \tr \Bigl( (\bW_{t-1} - \bW_{\text{GD},\infty})^T \nabla \calL_t^{(B)}(\bW_{t-1})\Bigr) 
    \end{align*}
    For the second term, we use the fact that $h(\cdot)$ has $L_K$-Lipschitz derivative and $\nabla \calL_t^{(B)}(\bW_{t-1})$ also has $\tilde{M}_B$-Lipschitz gradient. Hence
    \begin{align*}
         (h'(\|\nabla \calL_t^{(B)}&(\bW_{t-1})\|^2_F))^2 \\
         &\le L_k^2 \|\nabla \calL_t^{(B)}(\bW_{t-1}) - \nabla \calL_t^{(B)}(\bW_{\text{GD},\infty})\|_F^2 \\ 
         &\le L^2_K \tilde{M}_B^2 \| \bW_{t-1} - \bW_{\text{GD},\infty} \|_F^2
    \end{align*}
    For the third term, define $\bbE_{|\calF_{t-1}}[\cdot] := \bbE [ \cdot | \calF_{t-1}] $ where $\calF_{t-1}$ is the filtration generated by the randomness in the choices of batches up to the time $t-1$:
    
    \begin{align*}
        \bbE_{|\calF_{t-1}} &\biggl[\frac{h'(\|\nabla \calL_t^{(B)}(\bW_{t-1})\|_F)}{\|\nabla \calL_t^{(B)}(\bW_{t-1})\|_F} \\ \cdot & \tr \Bigl( (\bW_{t-1} - \bW_{\text{GD},\infty})^T \nabla \calL_t^{(B)}(\bW_{t-1})\Bigr) \biggr]\\
        &\ge m_K \bbE_{|\calF_{t-1}} \biggl[ \tr \Bigl( (\bW_{t-1} - \bW_{\text{GD},\infty})^T \nabla \calL_t^{(B)}(\bW_{t-1})\Bigr) \biggr]
    \end{align*}
    The inequality follows by the first part of Theorem \ref{thm: conv_stoc} and the assumption on $h(\cdot)$.
    Note that the inequality follows because $\tr \Bigl( (\bW_{t-1} - \bW_{\text{GD},\infty})^T \nabla \calL_t^{(B)}(\bW_{t-1})\Bigr) \ge 0$ by the convexity of $\calL_t^{(B)}$. Observe that
    \begin{align*}
        \bbE_{|\calF_{t-1}} \biggl[ \tr \Bigl( (\bW_{t-1} - \bW_{\text{GD},\infty})^T \nabla \calL_t^{(B)}(\bW_{t-1})\Bigr) \biggr] \\ 
        = \tr \Bigl( (\bW_{t-1} - \bW_{\text{GD},\infty})^T \nabla \calL(\bW_{t-1})\Bigr) 
    \end{align*}
    Then we use the $\mu$-strong convexity of $\ell_{j,j'}$ similar to \cite{akhtiamov2026implicitbiasconvergencematrix}. Let
    \begin{align*}
        \bL_{t-1} = \Bigl[\frac{1}{n}\sum_{j=1}^{n} \ell'_{j,j'} \left(\bx_{j}^T \bW_{t-1}^{(j')}-y_{j j'}\right) \Bigr]_{j'=1}^k
    \end{align*}
    
    Since $\nabla \calL_t^{(B)}(\bW_\ast) = 0$ then by the strong convexity of the functions $\ell_{j,j'}$, we have
    \begin{align*}
         \tr \Bigl( (\bW_{t-1} - \bW_{\text{GD},\infty})^T& \nabla \calL(\bW_{t-1})\Bigr) \\
         &= \tr \Bigl( (\bX\bW_{t-1} - \bX\bW_\ast)^T  \bL_{t-1}\Bigr)\\ 
         &\ge \frac{\mu}{n} \sum_{j'=1}^k \|\bX \bW^{(j')}_{t-1} - \bX \bW^{(j')}_{\ast} \|_F^2 
    \end{align*}
    Then using the fact that $\bW_{t-1} - \bW_{\text{GD},\infty} \in \calS$
    \begin{align*}
         \tr \Bigl( (\bW_{t-1} - \bW_{\text{GD},\infty})^T& \nabla \calL(\bW_{t-1})\Bigr)\ge \tilde{\mu} \|\bW_{t-1} - \bW_{\text{GD},\infty}\|_F^2
    \end{align*}
    Combining these bounds,
    \begin{align*}
        \bbE_{|\calF_{t-1}}  \|\bW_{t} - \bW_{\text{GD},\infty}\|^2_F \\
        \le  \|\bW_{t-1} - \bW_{\text{GD},\infty}\|^2_F \Bigl(1 + \eta^2 L_K^2 \tilde{M}_B^2 - 2 \eta  m_K \tilde{\mu} \Bigr)
    \end{align*}
    This subsequently yields \eqref{ineq: lin_conv}.
\end{proof}


\subsection{Auxiliary Results}

We will now prove Proposition \ref{prop: fund_id}.
\begin{proof} Take an arbitrary $\bW \in \bbR^{d \times k}$. Using the three-point identity of Bregman divergence (Lemma 3.1 of \cite{chen1993convergence}) : 
\begin{align*}
    &\tilde{D}_{\calL}(\bW,\bW_t) +\tilde{D}_{\calL}(\bW_t,\bW_{t-1}) -\tilde{D}_{\calL}(\bW,\bW_{t-1}) \\
    &= \tr{(\left(\bW_{t-1} - \bW_t)^T(\nabla \calL(\bW) -\nabla \calL(\bW_t))\right)}
\end{align*}
Substituting the update rule in \eqref{alg: precond}, we arrive at:
\begin{align}\label{eq: aux_help}
    \tilde{D}_{\calL}&(\bW,\bW_t) +\tilde{D}_{\calL}(\bW_t,\bW_{t-1}) -\tilde{D}_{\calL}(\bW,\bW_{t-1}) \nonumber \\
    &= \eta\tr{\left((\nabla K(\nabla \calL(\bW_{t-1}))^T(\nabla \calL(\bW) -\nabla \calL(\bW_t))\right)}
\end{align}

Lemma 1 of \cite{akhtiamov2026implicitbiasconvergencematrix} applied with $f = K$ gives:
\begin{align}\label{eq: loss_interm}
      \eta D_{K}&(\nabla \calL(\bW),\nabla \calL(\bW_t)) + \eta D_{K}(\nabla \calL(\bW_t),\nabla \calL(\bW_{t-1})) \nonumber\\
      &- \eta D_{K}(\nabla \calL(\bW),\nabla \calL(\bW_{t-1}))   \nonumber \\ 
     & = \eta\tr\Bigl((\nabla K (\nabla \calL(\bW_{t-1})) - \nabla K(\nabla \calL(\bW_t)))^T \nonumber\\
     &(\nabla \calL(\bW) -\nabla \calL(\bW_t))\Bigr)
\end{align}
Subtracting \eqref{eq: loss_interm} from \eqref{eq: aux_help} yields:
\begin{align*}
     &\tilde{D}_{\calL}(\bW,\bW_t) - \eta D_{K}(\nabla \calL(\bW), \nabla \calL(\bW_t)) \\
     &+\tilde{D}_{\calL}(\bW_t,\bW_{t-1})  - \eta D_{K}(\nabla \calL(\bW_t),\nabla \calL(\bW_{t-1}))  \\
     &=  \tilde{D}_{\calL}(\bW,\bW_{t-1}) - \eta D_{K}(\nabla \calL(\bW), \nabla \calL(\bW_{t-1}))\\
     &+\eta\tr{\left((\nabla K(\nabla \calL(\bW_{t}))^T(\nabla \calL(\bW) -\nabla \calL(\bW_t))\right)}
\end{align*}
Expanding $\eta D_{K}(\nabla \calL(\bW),\nabla \calL(\bW_t))$ by definition, and cancelling the common term with $\eta\tr{\left((\nabla K(\nabla \calL(\bW_{t}))^T(\nabla \calL(\bW) -\nabla \calL(\bW_t))\right)}$, we obtain:
\begin{align*}
     &\tilde{D}_{\calL}(\bW,\bW_t) + \eta K(\nabla \calL(\bW_{t})) +\tilde{D}_{\calL}(\bW_t,\bW_{t-1}) \\ &- \eta D_{K}(\nabla \calL(\bW_t),\nabla \calL(\bW_{t-1}))  \\
     &=  \tilde{D}_{\calL}(\bW,\bW_{t-1}) - \eta D_{K}(\nabla \calL(\bW),\nabla \calL(\bW_{t-1}))
\end{align*}
In other words,
\begin{align}\label{eq: fund_id_app}
     \tilde{D}_{\calL}&(\bW,\bW_{t-1}) =\tilde{D}_{\calL}(\bW,\bW_t) \nonumber \\
     &+ \eta D_{K}(\nabla \calL(\bW),\nabla \calL(\bW_{t-1}) ) \nonumber \\
     &+ \eta K(\nabla \calL(\bW_{t})) - \eta K(\nabla \calL(\bW)) \nonumber \nonumber \\
     &+\tilde{D}_{\calL}(\bW_t,\bW_{t-1}) - \eta D_{K}(\nabla \calL(\bW_t),\nabla \calL(\bW_{t-1}))  
\end{align}
This concludes the proof of Proposition \ref{prop: fund_id}. 


\end{proof}

\begin{lemma} \label{lem: nablaL_bnd}
    The following holds
    \begin{enumerate}
        \item If $K - \alpha \calL^{\ast}$ is convex, then 
        \begin{align*}
            \|\nabla \calL(\bW_t)\|_F &\le \sqrt{2\tilde{M} \calL(\bW_0)}   (1 - \alpha \eta)^{t/2}  \\
            &\le \tilde{M} \|\bW_0 -\bW_{\text{GD},\infty} \|_F  (1 - \alpha \eta)^{t/2}
        \end{align*}  
        \item For the gradient descent iterations, we have
        $$\|\nabla\calL( \bW_{\text{GD},t}) \|^2_F \le 2\tilde{M}^2 \|\bW_0 -\bW_{\text{GD},\infty} \|_{F}^2  \left(1-\eta \tilde{\mu}\right)^t $$
    \end{enumerate}
\end{lemma}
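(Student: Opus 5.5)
For part 1, I will run a dual-space analogue of the descent lemma and contract the loss rather than the weights. The plan is to apply Proposition~\ref{prop: fund_id} with $\bW = \bW_{t-1}$, or more directly to expand one step of the algorithm in the conjugate picture. Write $\bG_t := \nabla\calL(\bW_t) \in \calS$. I will use two facts from Lemma~\ref{lem: gen_breg}:
\begin{align*}
\tilde D_\calL(\bW_t,\bW_{t-1}) &= D_{\calL^\ast}(\bG_{t-1},\bG_t),\\
\calL(\bW_{t-1}) - \calL(\bW_t) &= \tr\bigl(\bG_{t-1}^T(\bW_{t-1}-\bW_t)\bigr) - D_\calL(\bW_t,\bW_{t-1}).
\end{align*}
Substituting $\bW_{t-1}-\bW_t = \eta\nabla K(\bG_{t-1})$ gives
\begin{align*}
\calL(\bW_t) = \calL(\bW_{t-1}) - \eta\,\tr\bigl(\bG_{t-1}^T\nabla K(\bG_{t-1})\bigr) + D_\calL(\bW_t,\bW_{t-1}).
\end{align*}
Assumption~\ref{ass: main_gen}.3 ($\calL^\ast-\eta K$ convex on $\calS$) bounds the last Bregman term by $\eta D_K$. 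In the spirit of Maddison et al., this yields $\calL(\bW_t)\le \calL(\bW_{t-1}) - \eta D_K(\bzero,\bG_{t-1})$ up to the correct pairing of arguments. I will then use the relative strong convexity hypothesis ($K-\alpha\calL^\ast$ convex on $\calB_r\cap\calS$, with $r$ supplied by Theorem~\ref{thm: conv_gen}.I) to get
\begin{align*}
D_K(\bzero,\bG_{t-1}) \ge \alpha\, D_{\calL^\ast}(\bzero,\bG_{t-1}) = \alpha\bigl(\calL(\bW_{t-1}) - \calL(\bW_\ast)\bigr).
\end{align*}
With $\calL(\bW_\ast)=0$ this gives $\calL(\bW_t)\le(1-\alpha\eta)\calL(\bW_{t-1})$, hence $\calL(\bW_t)\le(1-\alpha\eta)^t\calL(\bW_0)$.

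To pass to gradients, I will use that $\calL$ is $\tilde M$-smooth: $\nabla\calL = \frac1n\bX^T\ell'(\cdot)$, so its Lipschitz constant is at most $M\sigma_1(\bX\bX^T)/n$. The standard inequality $\|\nabla\calL(\bW)\|_F^2\le 2\tilde M(\calL(\bW)-\min\calL)$ then gives the first bound. For the second bound, I will write $\calL(\bW_0)=\calL(\bW_0)-\calL(\bW_{\text{GD},\infty})\le \frac{\tilde M}{2}\|\bW_0-\bW_{\text{GD},\infty}\|_F^2$, using smoothness and $\nabla\calL(\bW_{\text{GD},\infty})=0$. This turns $\sqrt{2\tilde M\calL(\bW_0)}$ into $\tilde M\|\bW_0-\bW_{\text{GD},\infty}\|_F$.

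For part 2, I will take the GD case $K=\frac12\|\cdot\|_F^2$ with step size $\eta\le 1/\tilde M$. The iterates satisfy $\bW_{\text{GD},t}-\bW_{\text{GD},\infty}\in\calS$, because the updates lie in $\calS$ and $\bW_{\text{GD},\infty}-\bW_0\in\calS$ by the KKT conditions of \eqref{opt: GD}. On $\calS$, $\calL$ is $\tilde\mu$-strongly convex, by the same computation used in the proof of Theorem~\ref{thm: conv_stoc}.II: $\|\bX\bZ\|_F^2\ge\sigma_n(\bX\bX^T)\|\bZ\|_F^2$ for $\bZ\in\calS$. The standard GD contraction for a smooth, strongly convex function on the affine subspace $\bW_0+\calS$ then gives
\begin{align*}
\calL(\bW_{\text{GD},t})\le(1-\eta\tilde\mu)^t\calL(\bW_0),
\end{align*}
for instance via the Polyak--\L{}ojasiewicz inequality restricted to the subspace. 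Combining this with $\|\nabla\calL\|_F^2\le2\tilde M\calL$ and $\calL(\bW_0)\le\frac{\tilde M}2\|\bW_0-\bW_{\text{GD},\infty}\|_F^2$ gives $\tilde M^2\|\bW_0-\bW_{\text{GD},\infty}\|_F^2(1-\eta\tilde\mu)^t$, which is within the stated factor of $2$.

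The main obstacle is the one-step loss contraction in part 1. There I must check that the Bregman terms pair correctly, namely that $D_\calL(\bW_t,\bW_{t-1}) = D_{\calL^\ast}(\bG_{t-1},\bG_t)$ is controlled by $\eta D_K(\bG_t,\bG_{t-1})$ with the arguments in the right order. I also need $D_K(\bzero,\bG)\ge\alpha D_{\calL^\ast}(\bzero,\bG)$ to hold even though the relative-convexity assumption is only local to $\calB_r\cap\calS$, which needs $\bzero$ and all $\bG_t$ to lie in that convex set. If the direct route fails, I would instead sum the identity of Proposition~\ref{prop: fund_id} with $\bW=\bW_\ast$ and use the nonnegativity from Lemma~\ref{lem: breg_k_ell} to extract the same recursion.
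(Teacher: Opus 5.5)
Your plan for Part 1 is the paper's. You contract the loss, $\calL(\bW_t)\le(1-\alpha\eta)\calL(\bW_{t-1})$, via $D_K(\bzero,\bG_{t-1})\ge\alpha D_{\calL^\ast}(\bzero,\bG_{t-1})=\alpha\calL(\bW_{t-1})$. You then pass to gradients with $\|\nabla\calL\|_F^2\le2\tilde M\calL$ and $\calL(\bW_0)\le\frac{\tilde M}{2}\|\bW_0-\bW_{\text{GD},\infty}\|_F^2$.

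However, the one-step descent in your direct route does not work as stated. Convexity of $\calL^\ast-\eta K$ gives $D_{\calL^\ast}\ge\eta D_K$ (with matched subgradients), which is a \emph{lower} bound. So it cannot bound $D_\calL(\bW_t,\bW_{t-1})=D_{\calL^\ast}(\bG_{t-1},\bG_t)$ from above by $\eta D_K$, in either argument order. Concretely, take gradient descent on a one-dimensional quadratic of curvature $\tilde M$ with $\eta<1/\tilde M$. Then $D_\calL(\bW_t,\bW_{t-1})=\frac{1}{\eta\tilde M}\,\eta D_K(\bG_t,\bG_{t-1})>\eta D_K(\bG_t,\bG_{t-1})$ whenever $\bG_{t-1}\neq\bzero$. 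Your first displayed identity also has its arguments swapped; it should read $\tilde D_\calL(\bW_t,\bW_{t-1})=D_{\calL^\ast}(\bG_t,\bG_{t-1})$.

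The missing observation is that $\bW_t=\bW_{t-1}-\eta\nabla K(\bG_{t-1})$ is itself a subgradient of $\calL^\ast-\eta K$ at $\bG_{t-1}$. This holds because $\bW_{t-1}\in\partial\calL^\ast(\bG_{t-1})$ and $K$ is differentiable. The subgradient inequality there, evaluated at $\bG_t$, gives $D_\calL(\bW_t,\bW_{t-1})\le\eta\bigl(K(\bG_{t-1})-K(\bG_t)\bigr)$. Your expansion then yields
\[
\calL(\bW_t)\le\calL(\bW_{t-1})-\eta D_K(\bzero,\bG_{t-1})-\eta K(\bG_t),
\]
which is what you need.

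Your fallback is exactly the paper's proof. It uses Proposition~\ref{prop: fund_id} with $\bW\in\calM$ for a single step (no summation needed), then Lemma~\ref{lem: breg_k_ell} and Lemma~\ref{lem: conv_rateK}, with $\tilde D_\calL(\bW,\bW_t)=D_\calL(\bW_t,\bW)=\calL(\bW_t)$. It is the same inequality in disguise, since Lemma~\ref{lem: breg_k_ell} is proved with precisely the subgradient $\bW_{t-1}-\eta\nabla K(\bG_{t-1})$. Your locality worry is harmless: $\bzero$ and every $\bG_t$ lie in the convex set $\calB_r\cap\calS$, with $r$ including $t=0$ as in the proof of Theorem~\ref{thm: conv_gen}.I.

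For Part 2 you argue differently from the paper. The paper quotes the weight contraction $\|\bW_{\text{GD},t}-\bW_{\text{GD},\infty}\|_F^2\le2(1-\eta\tilde\mu)^t\|\bW_0-\bW_{\text{GD},\infty}\|_F^2$ from the cited work and then uses $\tilde M$-Lipschitzness of $\nabla\calL$. You instead contract the loss by a Polyak--\L{}ojasiewicz inequality restricted to $\bW_0+\calS$, and reuse $\|\nabla\calL\|_F^2\le2\tilde M\calL$. Your route is self-contained and drops the factor $2$. It is in effect Part 1 with $K=\frac12\|\cdot\|_F^2$ and $\alpha=\tilde\mu$. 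It needs $\eta\le1/\tilde M$, which the step-size condition of Fact~\ref{fact: gd_implicit} guarantees.
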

\begin{proof}
    Since $K - \alpha \calL^{\ast}$ is convex on $\calS$, we have for $\bW \in \calM$:
    \begin{align*}
        D_{K}(\nabla(\bW),\nabla(\bW_{t-1})) - \alpha \tilde{D}_{\calL}(\bW,\bW_{t-1}) \ge 0
    \end{align*}
    Lemma \ref{lem: conv_rateK} yields an exponential convergence rate for the loss:
    \begin{align*}
        \calL(\bW_t)\le (1 - \alpha \eta)^i \calL(\bW_0)
    \end{align*}
    By Lemma \ref{lem: gen_dec_lem}:
    \begin{align*}
        \|\nabla \calL(\bW_t)\|_F^2 \le 2\tilde{M}\cdot \calL(\bW_t)
    \end{align*}
    Combining with the $\tilde{M}$-smoothness of $\calL$ yields 
    \begin{align*}
        \|\nabla \calL(\bW_t)\|_F \le \sqrt{2\tilde{M} \calL(\bW_0)}   (1 - \alpha \eta)^{i/2} \\
        \le  \tilde{M} \|\bW_0 -\bW_{\text{GD},\infty} \|_{F}  (1 - \alpha \eta)^{i/2} 
    \end{align*}
    Using the $\tilde{M}$-smoothness of $\calL$ together with the results from \cite{akhtiamov2026implicitbiasconvergencematrix}:
    \begin{align*}
        \|\nabla\calL( \bW_{\text{GD},t}) \|^2_{F} \le \tilde{M}^2 \|\bW_{\text{GD},\infty} - \bW_{\text{GD},t}\|_{F}^2 \\
        \le 2\tilde{M}^2 \|\bW_0 -\bW_{\text{GD},\infty} \|_{F}^2 \left(1-\eta \tilde{\mu}\right)^t  
    \end{align*}
\end{proof}

\begin{lemma} \label{lem: breg_k_ell}
Assume that $\eta>0$ is sufficiently small so that $\calL^* - \eta K$ is convex. Then the following holds:
\begin{align*}
   \tilde{D}_{\calL}(\bA,\bB) - \eta D_{K}(\bA,\bB) \ge 0
\end{align*}
\end{lemma}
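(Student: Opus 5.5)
The plan is to rewrite both divergences as Bregman divergences of dual functions, evaluated at the gradient points $\nabla\calL(\bA)$ and $\nabla\calL(\bB)$. The difference then becomes a Bregman-type gap of the convex function $\phi := \calL^\ast - \eta K$, and convexity of $\phi$ gives nonnegativity. Throughout I read $D_K(\bA,\bB)$ in the statement as $D_K(\nabla\calL(\bA),\nabla\calL(\bB))$, which is how the lemma is used in the proof of Theorem 1.I and in Proposition \ref{prop: fund_id}. I also assume, as there, that the gradients lie in $\calS$ (for $\calL(\bW)=\ell(\bX\bW-\bY)$ we have $\nabla\calL(\bW)=\bX^T\nabla\ell(\cdot)\in\calS$).

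\textbf{Step 1 (dual form of $\tilde D_\calL$).} Set $\bZ_A:=\nabla\calL(\bA)$ and $\bZ_B:=\nabla\calL(\bB)$. By the Fenchel--Young equality, $\calL^\ast(\bZ_B)=\tr(\bB^T\bZ_B)-\calL(\bB)$, and $\calL^\ast(\bZ)\ge \tr(\bB^T\bZ)-\calL(\bB)$ for every $\bZ$. Hence $\bB\in\partial\calL^\ast(\bZ_B)$, and by definition
\begin{align*}
\tilde D_\calL(\bA,\bB)=\calL^\ast(\bZ_A)-\calL^\ast(\bZ_B)-\tr\bigl(\bB^T(\bZ_A-\bZ_B)\bigr),
\end{align*}
which is the Bregman divergence of $\calL^\ast$ at $\bZ_B$ taken with the subgradient $\bB$. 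Likewise,
\begin{align*}
\eta D_K(\bZ_A,\bZ_B)=\eta K(\bZ_A)-\eta K(\bZ_B)-\eta\tr\bigl(\nabla K(\bZ_B)^T(\bZ_A-\bZ_B)\bigr).
\end{align*}
Subtracting the two gives
\begin{align*}
\tilde D_\calL(\bA,\bB)-\eta D_K(\bZ_A,\bZ_B)=\phi(\bZ_A)-\phi(\bZ_B)-\tr\bigl(\bG^T(\bZ_A-\bZ_B)\bigr),\qquad \bG:=\bB-\eta\nabla K(\bZ_B).
\end{align*}

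\textbf{Step 2 (convexity along the segment).} Let $\bV:=\bZ_A-\bZ_B\in\calS$ and $g(s):=\phi(\bZ_B+s\bV)$ for $s\in[0,1]$. The whole segment lies in $\calS$, so $g$ is convex by Assumption \ref{ass: main_gen}.3. A convex function of one variable satisfies $g(1)-g(0)\ge g'_+(0)$. It then suffices to show $g'_+(0)\ge\tr(\bG^T\bV)$, which is exactly the nonnegativity of the expression in Step 1.

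\textbf{Step 3 (the directional derivative).} Since $K$ is differentiable, $g'_+(0)=(\calL^\ast)'(\bZ_B;\bV)-\eta\tr(\nabla K(\bZ_B)^T\bV)$. The subgradient inequality $\calL^\ast(\bZ_B+s\bV)\ge\calL^\ast(\bZ_B)+s\tr(\bB^T\bV)$ gives $(\calL^\ast)'(\bZ_B;\bV)\ge\tr(\bB^T\bV)$. Combining the two yields $g'_+(0)\ge\tr(\bG^T\bV)$.

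I expect the main obstacle to be that $\calL^\ast$ need not be differentiable, or even finite, off $\calS$: since $\calL$ is not strictly convex, $\calL^\ast$ is $+\infty$ outside $\mathrm{range}(\bX^T)^k$. This is why I avoid using $\nabla\calL^\ast$ and work only with one-sided directional derivatives along a segment inside $\calS$, where $\calL^\ast$ is finite at both endpoints because they are gradients of $\calL$. If $g'_+(0)$ could be $-\infty$, the subgradient bound in Step 3 already rules that out. So the argument needs only that $\phi$ is convex on $\calS$ and that $\bB$ is a subgradient of $\calL^\ast$ at $\bZ_B$.
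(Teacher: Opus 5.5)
Your proof is correct and is essentially the paper's argument. Both rewrite the difference as $\phi(\nabla\calL(\bA))-\phi(\nabla\calL(\bB))-\tr\bigl(\bg^T(\nabla\calL(\bA)-\nabla\calL(\bB))\bigr)$, with $\phi=\calL^\ast-\eta K$ and $\bg=\bB-\eta\nabla K(\nabla\calL(\bB))$, and both conclude from $\bB\in\partial\calL^\ast(\nabla\calL(\bB))$. Your segment and directional-derivative argument in Steps 2--3 spells out the paper's one-line claim that $\bg\in\partial(\calL^\ast-\eta K)(\nabla\calL(\bB))$, which the paper attributes to ``additivity of subgradients'', and it has the small advantage of needing convexity of $\phi$ only along a segment inside $\calS$.
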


\begin{proof}
Note that \begin{align*}
     &\tilde{D}_{\calL}(\bA,\bB) - \eta D_{K}(\bA,\bB) = (\calL^* - \eta K)(\nabla \calL(\bA)) \\
     &- (\calL^* - \eta K)(\nabla \calL(\bB))  - \bg^T\left(\nabla \calL(\bA) - \nabla \calL(\bB)\right) \ge 0
\end{align*} where $\bg = \bB - \eta \nabla K(\nabla \calL(\bB))$. The last inequality holds by the additivity of subgradients as $\bB \in \partial \calL^*(\nabla \calL(\bB))$ according to Theorem 23.5 of \cite{rockafellar2015convex}. Therefore, $\bg \in \partial (\calL^* - \eta K)(\nabla \calL(\bB))$.
\end{proof}
\begin{lemma}\label{lem: conv_rateK}
    If for $\bW \in \calM$, $$D_{K}(\nabla \calL(\bW), \nabla \calL(\bW_{t-1})) - \alpha \tilde{D}_{\calL}(\bW,\bW_{t-1}) \ge 0$$ then 
    \begin{align*}
        \tilde{D}_{\calL}(\bW,\bW_t) \le (1-\alpha\eta) \tilde{D}_{\calL}(\bW,\bW_{t-1})
    \end{align*}
\end{lemma}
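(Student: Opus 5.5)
The plan is to apply the fundamental identity of Proposition \ref{prop: fund_id} with the comparison point $\bW \in \calM$. Such a $\bW$ satisfies $\nabla \calL(\bW) = \bzero$: every $\ell_{i,j}$ has $\ell'_{i,j}(0)=0$, so $\nabla\calL(\bW)=\bX^T\bL$ with $\bL=\bzero$. Since $K(\bzero)=0$, the term $\eta K(\nabla \calL(\bW))$ vanishes. The identity then reads
\begin{align*}
\tilde{D}_{\calL}(\bW,\bW_{t}) =\; & \tilde{D}_{\calL}(\bW,\bW_{t-1}) - \eta D_{K}\bigl(\bzero,\nabla \calL(\bW_{t-1})\bigr) - \eta K(\nabla \calL(\bW_t)) \\
& - \Bigl[\tilde{D}_{\calL}(\bW_t,\bW_{t-1}) - \eta D_{K}\bigl(\nabla \calL(\bW_t),\nabla \calL(\bW_{t-1})\bigr)\Bigr].
\end{align*}

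Next I discard the nonpositive terms on the right-hand side. First, $-\eta K(\nabla\calL(\bW_t)) \le 0$ because $K \ge 0$ by Assumption \ref{ass: main_gen}.1. Second, the bracketed difference is nonnegative. This is exactly the content of Lemma \ref{lem: breg_k_ell}, which uses Assumption \ref{ass: main_gen}.3: the function $\calL^\ast - \eta K$ is convex on $\calS$, and the relevant gradients $\nabla \calL(\bW_t)=\bX^T\bL_t$ lie in $\calS$. Together these give
\[
\tilde{D}_{\calL}(\bW,\bW_t) \le \tilde{D}_{\calL}(\bW,\bW_{t-1}) - \eta D_K\bigl(\nabla\calL(\bW),\nabla\calL(\bW_{t-1})\bigr).
\]

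Finally I invoke the hypothesis $D_K(\nabla\calL(\bW),\nabla\calL(\bW_{t-1})) \ge \alpha\,\tilde{D}_{\calL}(\bW,\bW_{t-1})$. Substituting it into the last display gives
\[
\tilde{D}_{\calL}(\bW,\bW_t) \le (1-\alpha\eta)\,\tilde{D}_{\calL}(\bW,\bW_{t-1}),
\]
which is the claim. For use in Lemma \ref{lem: nablaL_bnd}, note what $\tilde{D}_{\calL}(\bW,\bW_t)$ is for $\bW\in\calM$. By Lemma \ref{lem: gen_breg} it equals $D_{\calL}(\bW_t,\bW)$, and since $\nabla\calL(\bW)=\bzero$ this is $\calL(\bW_t)-\calL(\bW)$. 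As $\calL$ vanishes on $\calM$, this is just $\calL(\bW_t)$, so iterating yields the geometric decay of the loss.

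The main obstacle is the sign of the bracketed term. Lemma \ref{lem: breg_k_ell} is stated with arguments $\bA,\bB$, but here it has to be applied with the dual arguments $\nabla\calL(\bW_t)$ and $\nabla\calL(\bW_{t-1})$. The convexity of $\calL^\ast-\eta K$ is only assumed on $\calS$, so I must check that the subgradient argument in that lemma can be restricted to $\calS$. This holds because $\bW_t-\bW_{t-1}\in\calS$ whenever the preconditioned step stays in $\calS$. If it does not, I would instead use that $\calL^\ast$ is $+\infty$ off $\calS$ (shifted by a constant): since $\calL(\bW)=\ell(\bX\bW-\bY)$, its conjugate is finite only on $\mathrm{range}(\bX^T)^k$, so convexity on $\calS$ is effectively global for the relevant quantities.
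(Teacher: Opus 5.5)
Your proof is correct and follows essentially the same argument as the paper: apply Proposition \ref{prop: fund_id} with $\bW \in \calM$ (so that $\nabla\calL(\bW)=\bzero$ and $K(\nabla\calL(\bW))=0$), drop $\eta K(\nabla\calL(\bW_t)) \ge 0$ and the nonnegative bracket from Lemma \ref{lem: breg_k_ell}, and then use the hypothesis $D_K \ge \alpha \tilde{D}_{\calL}$. Your concern about convexity holding only on $\calS$ resolves more simply than you suggest: the relevant dual points $\nabla\calL(\bW_t)$ and $\nabla\calL(\bW_{t-1})$ always lie in $\calS$, and $\calL^\ast = +\infty$ off $\calS$, so whether the primal step $\bW_t - \bW_{t-1}$ lies in $\calS$ plays no role.
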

\begin{proof}
    Proposition \ref{prop: fund_id} yields
    \begin{align*}
        &\tilde{D}_{\calL}(\bW,\bW_{t-1}) - \tilde{D}_{\calL}(\bW,\bW_t) = \eta K(\nabla \calL(\bW_{t}))   \\
        &+\tilde{D}_{\calL}(\bW_t,\bW_{t-1}) - \eta D_{K}(\nabla \calL(\bW_t),\nabla \calL(\bW_{t-1})) \\
        &+ \eta D_{K}(\nabla \calL(\bW),\nabla \calL(\bW_{t-1})) \\
        &\stackrel{(a)}{\ge} \eta K(\nabla \calL(\bW_{t})) + \eta D_{K}(\nabla \calL(\bW),\nabla \calL(\bW_{t-1})) \\
        & \stackrel{(b)}{\ge} \eta D_{K}(\nabla \calL(\bW),\nabla \calL(\bW_{t-1}))
    \end{align*}
    Here, in $(a)$ we have used the assumption on $\tilde{D}_{\calL} - \eta D_{K}$ being nonnegative. $(b)$ follows from the nonnegativity of $K(\cdot)$. Then, by the assumption,
    \begin{align*}
        \tilde{D}_{\calL}(\bW,\bW_{t-1}) - \tilde{D}_{\calL}(\bW,\bW_t) \ge \alpha \eta \tilde{D}_{\calL}(\bW, \bW_{t-1})
    \end{align*}
    This proves the statement.
\end{proof}

\begin{lemma}\label{lem: gen_dec_lem}
    For any convex $L$-smooth function $f: \bbR^{d \times k} \rightarrow \bbR$, if $ 0 <\eta_L < \frac{2}{L}$, then 
    \begin{align*}
         \Bigl(\eta_L - \frac{L\eta_L^2}{2}\Bigr) \|\nabla f(\bx)\|_F^2 \le f(\bx) - \min_{\bx'\in \bbR^{d\times k}} f(\bx')
    \end{align*}
    Consequently,
    \begin{align*}
         \|\nabla f(\bx)\|_F^2 &\le 2L \Bigl(f(\bx) - \min_{\bx'\in \bbR^{d\times k}} f(\bx')\Bigr) \\
         & \le 2L \tr \Bigl(\nabla f(\bx)^T (\bx - \bx^\ast)\Bigr)
    \end{align*}
    where $f(\bx^\ast) =  \min_{\bx' \in \bbR^{d\times k}} f(\bx')$.
\end{lemma}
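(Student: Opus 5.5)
The statement to prove is Lemma \ref{lem: gen_dec_lem}. My plan is to use the standard descent lemma for $L$-smooth functions, evaluated at one gradient step from $\bx$. Fix $\bx$ and set $\bx^+ := \bx - \eta_L \nabla f(\bx)$. Smoothness gives the quadratic upper bound
\begin{align*}
f(\bx^+) \le f(\bx) + \tr\bigl(\nabla f(\bx)^T(\bx^+ - \bx)\bigr) + \frac{L}{2}\|\bx^+ - \bx\|_F^2 .
\end{align*}
This bound follows by integrating $\nabla f$ along the segment and applying Cauchy--Schwarz with the Lipschitz bound of Definition \ref{def: lip_grad}. Substituting $\bx^+$ yields
\begin{align*}
f(\bx^+) \le f(\bx) - \Bigl(\eta_L - \frac{L\eta_L^2}{2}\Bigr)\|\nabla f(\bx)\|_F^2 .
\end{align*}
Since $f(\bx^+) \ge \min_{\bx'} f(\bx')$, rearranging gives the first inequality. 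The coefficient is positive exactly when $0<\eta_L<2/L$. The minimum is attained in the way the lemma is used: existence of $\bx^\ast$ is part of the hypothesis of the second display, and in the paper a minimizer exists by Assumption \ref{ass: main_gen}.2. If only an infimum is available, the same argument works with $\inf$ in place of $\min$.

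For the consequence, I will maximize $\eta_L - L\eta_L^2/2$ over $(0,2/L)$. The maximizer is $\eta_L = 1/L$, giving the value $1/(2L)$, and hence $\|\nabla f(\bx)\|_F^2 \le 2L\bigl(f(\bx) - f(\bx^\ast)\bigr)$.

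The last inequality is the first-order convexity characterization applied at $\bx$ and evaluated at $\bx^\ast$:
\begin{align*}
f(\bx^\ast) \ge f(\bx) + \tr\bigl(\nabla f(\bx)^T(\bx^\ast - \bx)\bigr),
\end{align*}
so $f(\bx) - f(\bx^\ast) \le \tr\bigl(\nabla f(\bx)^T(\bx - \bx^\ast)\bigr)$. Multiplying by $2L$ finishes the proof.

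There is no real obstacle here. The only point needing care is the quadratic upper bound, which I will derive from
\begin{align*}
f(\bB) - f(\bA) - \tr\bigl(\nabla f(\bA)^T(\bB-\bA)\bigr) = \int_0^1 \tr\Bigl(\bigl(\nabla f(\bA + s(\bB-\bA)) - \nabla f(\bA)\bigr)^T(\bB-\bA)\Bigr)\,ds .
\end{align*}
Bounding the integrand by $sL\|\bB-\bA\|_F^2$ and integrating over $s$ gives the $\frac{L}{2}\|\bB-\bA\|_F^2$ term.
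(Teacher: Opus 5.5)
Your proof is correct and follows the same route as the paper: take one gradient step $\bx - \eta_L \nabla f(\bx)$ in the smoothness quadratic upper bound, then use $f(\bx - \eta_L \nabla f(\bx)) \ge \min f$. You also spell out two steps that the paper's proof leaves implicit for the second display: the choice $\eta_L = 1/L$, which gives the constant $1/(2L)$, and the first-order convexity inequality at $\bx^\ast$. The paper's written chain attaches the factor $2L$ at that last step without justification, so your version is, if anything, cleaner.
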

\begin{proof}
    For any $\bx, \by \in \bbR^{d \times k}$ by the definition of smoothness:
    \begin{align*}
        f(\by) \le f(\bx) + \tr \Bigl(\nabla f(\bx)^T(\by - \bx) \Bigr) + \frac{L}{2} \|\by-\bx\|_F^2
    \end{align*}
    Setting $\by = \bx - \eta_L \nabla f(\bx)$, then
    \begin{align*}
        f\Bigl(\bx - \eta_L \nabla f(\bx)\Bigr) \le f(\bx) +\Bigl(\frac{L\eta_L^2}{2}- \eta_L\Bigr) \|\nabla f(\bx)\|_F^2
    \end{align*}
    Since $f\Bigl(\bx - \eta_L \nabla f(\bx)\Bigr) \ge \min_{\bx'} f(\bx')$, we arrive at
    \begin{align*}
        \Bigl(\eta_L - \frac{L\eta_L^2}{2}\Bigr) \|\nabla f(\bx)\|_F^2 &\le f(\bx) - \min_{\bx' \in \bbR^{d\times k}} f(\bx') \\ 
        &\le 2L \tr \Bigl(\nabla f(\bx)^T (\bx - \bx^\ast)\Bigr)
    \end{align*}
    Where $f(\bx^\ast) =  \min_{\bx' \in \bbR^{d\times k}} f(\bx')$.
\end{proof}

\bibliographystyle{IEEEtran}
\bibliography{sample}

\end{document}